\documentclass[journal]{IEEEtran}
\usepackage{cite}
\usepackage{amsmath,amssymb,amsfonts,amsthm,mathtools,bm,bbm}
\usepackage{algorithmic}
\usepackage{graphicx}
\usepackage{textcomp}
\usepackage{xcolor}
\usepackage{microtype}

\usepackage[
  colorlinks=false,
  pdfborder={0 0 1},
  citebordercolor={0 1 0},   % bibliography citations: green
  linkbordercolor={1 0 0},   % internal cross-references: red
  urlbordercolor={0 1 1}
]{hyperref}

\usepackage{orcidlink}

\theoremstyle{plain}
\newtheorem{theorem}{Theorem}
\newtheorem{lemma}{Lemma}
\newtheorem{corollary}{Corollary}

\theoremstyle{definition}

\newtheorem{remark}{Remark}

\DeclareMathOperator{\Tr}{Tr}
\DeclareMathOperator{\diag}{diag}

\begin{document}

\title{PAC-Bayesian Adversarially Robust Generalization for Message Passing Graph Neural Networks: \\A Sensitivity Analysis}

\author{
Ziling~Liang,
Xinping~Yi,~\IEEEmembership{Member,~IEEE},
Qingsong~Wen,~\IEEEmembership{Senior Member,~IEEE},
and~Shi~Jin,~\IEEEmembership{Fellow,~IEEE}
\thanks{Z. Liang, X. Yi, and S. Jin are with the School of Information
Science and Engineering, Southeast University, Nanjing 210096, China.
E-mail: \{zlliang, xyi, jinshi\}@seu.edu.cn.}
\thanks{Q. Wen is with Squirrel Ai Learning, Bellevue, WA, 98004 USA. Email: qingsongedu@gmail.com.}
}

\maketitle

\begin{abstract}
Whilst the vulnerability of graph neural networks (GNNs) to adversarial attacks poses a critical threat to graph representation learning, the understanding of the robust generalization behavior remains a fundamental challenge in the adversarial setting. Recently, PAC-Bayesian margin-based generalization analysis substantially advances this line of research by providing a flexible and data-dependent analytical framework. However, existing robust analyses often rely on isotropic Gaussian posteriors and control weight perturbations in the full parameter space, which limits the ability to capture heterogeneous parameter sensitivity yet hinges on hidden-width-dependent complexity terms, resulting in not-tight-enough generalization bounds.
In this paper, we extend a recently proposed sensitivity-aware PAC-Bayesian framework from deep neural networks to message passing GNNs (MPGNNs) and derive a tighter robust generalization bound in the adversarial setting. Specifically, we first quantify how sensitive the perturbations across different parameter blocks are to the network outputs by deriving the output Jacobians with respect to the weight parameters. Exploiting the fact that these Jacobian matrices have rank at most \(K\) in \(K\)-class graph classification, we then construct Jacobian-aligned sensitivity matrices and use anisotropic Gaussian posteriors with optimized covariances to upper bound the KL divergence in a tight way.
Notably, by refining the spectral-norm dependence on the learned weights and reducing the leading dimension factor from hidden-width-dependent terms to the number of classes \(K\), our analysis yields much tighter robust generalization guarantees for MPGNNs,  thereby guiding their designs to enhance adversarial robustness.
\end{abstract}
\section{Introduction}

Graph neural networks (GNNs) have achieved remarkable performance in a
wide range of graph-related tasks, emerging as a powerful learning
paradigm for relational and structured data~\cite{GNN-Model,Graph-Networks,GCN}. They have been successfully applied to social
networks~\cite{GraphSAGE}, biological modeling~\cite{Decagon},
recommendation systems~\cite{PinSage}, and wireless networks~\cite{Wireless-GNN}, to name just a few. The key mechanism of GNNs is to propagate information over the graph through local computations shared across nodes and edges, which enables the learned representations to capture both feature information and graph-structural dependencies.

Despite their empirical success, the reliability of GNNs under adversarial perturbations remains a critical concern. Similar to the vulnerability of convolutional neural networks (CNNs) to adversarial examples in image classification tasks~\cite{Szegedy14,Goodfellow15}, where certain imperceptible perturbations of input images can mislead the well-trained models to make wrong classification in the inference phase, the graph learning tasks using GNNs can also suffer significantly from adversarial perturbations on node features or graph
structures~\cite{Nettack,Metattack,AdvExamples-Graph,GraphAttack-BO,Revisit-GraphAttack}.
The consequence of adversarial perturbations is even more severe for GNNs,
as local perturbations can be propagated and amplified through message passing, thereby affecting graph-level predictions. It, therefore, provides a stringent test of whether learned representations are stable
to both attribute-level and topology-level variations. As such, there is a growing line of work in the literature to investigate adversarial attacks and defenses (e.g., adversarial training) for
GNNs~\cite{GNNGuard,LowRankDefense,CausalGNNDefense-TIFS,ATGSE-TIFS} to probe and defend against the potential adversarial vulnerability.
However, these algorithmic and/or empirical studies do not convincingly explain whether the
adversarial robustness observed on training graphs can generalize to unseen graph
samples. Such a gap makes it essential to analyze adversarially robust generalization for understanding the reliability of GNNs.

From a statistical learning perspective, the weight sharing and topology-dependent message passing pose fundamental challenges for classical generalization analyses. To address these challenges, several theoretical frameworks have been explored for GNNs. The VC-dimension analyses provide capacity measures for GNNs, but the resulting bounds often scale unfavorably with model and graph size~\cite{Scarselli-VC,Esser-LearningTheory-GNN}. 
More refined data-dependent analyses use Rademacher complexity to study message passing GNNs (MPGNNs) and graph convolutional networks (GCNs), relating the bounds to local computation trees, aggregation neighborhoods, and graph-dependent quantities~\cite{Generalization-Limits-GNN,Esser-LearningTheory-GNN,Lv-Rademacher-GCN}.
The algorithmic stability analyses provide another perspective by connecting generalization to the stability of the training
algorithm and the spectral behavior of graph filters~\cite{Stability-GCN,Zhou-Generalization-Error-GCN}. 
The neural tangent kernel (NTK) methods further characterize the infinite-width regime of GNNs trained by gradient descent through graph NTKs~\cite{GNTK}. 
These studies indicate that generalization performance for GNNs is shaped by graph statistics, aggregation dynamics, feature propagation, and the norm complexity of the learned weights. Nevertheless, most existing bounds are developed for standard non-adversarial settings and analyze these factors separately.

Among existing approaches, PAC-Bayesian provides a flexible and
data-dependent framework for deriving high-probability generalization
bounds~\cite{McAllester03,Langford03,Catoni07,PACBayes-DataPriors,
PACBayes-Nonvacuous}. In deep learning, norm-based and margin-based analyses connect generalization to the stability of predictions under random weight perturbations, leading to bounds governed by norm-based complexity measures of the learned weights~\cite{Spectral-Margin-Bounds,PACBayes-SpectralNorm}.
The norm-based viewpoint has also been extended to graph-structured models. In particular, PAC-Bayesian bounds for GCNs and MPGNNs show that graph-dependent quantities and spectral norms of the
learned weights play a central role in controlling GNN generalization~\cite{Liao-PACBayes-GNN}. For GCNs, these bounds can be viewed as graph
extensions of spectrally normalized margin bounds for feedforward and
convolutional networks, while for MPGNNs they require a separate treatment because the input-injection and message-passing terms make the architecture non-homogeneous in the layer weights. Subsequent PAC-Bayesian analyses further refine this line of work by measuring graph dependence through the spectral norm of the graph diffusion operator, rather than solely through node degrees~\cite{Graph-Diffusion-PACBayes,yi2026topology}.
In adversarial settings, PAC-Bayesian analysis has been extended from
standard margin losses to robust margin losses~\cite{PACBayes-Adversarial,PACBayes-Adversarial-Spectral}. This viewpoint is particularly useful because the worst-case adversarial example depend on the model weights, making direct output comparison under weight perturbations difficult. For GNNs, robust PAC-Bayesian bounds evaluate margins over adversarially perturbed graph samples and show that the resulting robust bounds are governed by the norms of the learned weights and the adversarial perturbation budget~\cite{Sun-PACBayes-Robust}.

Despite the growing body of PAC-Bayesian generalization bounds for GNNs,
existing analyses still exhibit several structural limitations.
First, most prior framework adopt isotropic Gaussian posteriors for analytical tractability, implicitly assuming homogeneous
uncertainty across all parameters; and the perturbation bound is governed by matrix concentration inequalities which are loose for deep or wide architectures.
Second, existing generalization analyses bound perturbations in the full parameter space, without exploiting the low-rank structure of the output sensitivity in graph classification tasks, leading the
complexity term depend on the maximum hidden dimension rather than the effective output dimension.
Third, existing PAC-Bayesian bounds usually analyze GCNs and MPGNNs
through separate perturbation arguments. Although GCN-type models can be
viewed as special cases of the MPGNN formulation, the connection is
not fully used to obtain a unified bound. Moreover, due to the
non-homogeneous structure of MPGNNs, existing analyses still use relatively coarse spectral factors for the learned weights, leaving room to tighten both the spectral dependence and the dimensional dependence of the bound.

Motivated by the above limitations, we develop a novel sensitivity-aware PAC-Bayesian analytical framework for adversarially robust generalization of MPGNNs, with the main contributions summarized as follows.
\begin{itemize}
\item \textbf{Unified robust PAC-Bayesian framework for MPGNNs.}
We extend the recently proposed sensitivity-aware anisotropic PAC-Bayesian framework in~\cite{Towards-Unified} from deep neural networks in the standard setting to graph-structured models in the adversarial setting, and develop a unified robust margin-based generalization analysis for MPGNNs under adversarial perturbations.
\item \textbf{Low-rank Jacobian-based perturbation analysis.}
We derive the blockwise output Jacobians of MPGNNs and show that they
have a low-rank structure determined by the output dimension, which allows us to design low-rank sensitivity matrices. Together with anisotropic Gaussian posteriors, these matrices focus the perturbation analysis on the parameter directions that affect the output margin.
\item \textbf{Refined spectral and dimensional dependencies.}
We tighten the robust PAC-Bayesian bound for MPGNNs in both spectral and
dimensional dependence, i.e., the spectral scale of the learned weights is
tightened, and the leading dimension factor is reduced from a
hidden-width-dependent term to the number of classes, and show that the obtained bound also
recovers GCNs as special cases.
\end{itemize}

The rest of this paper is organized as follows.
Section~\ref{sec:prelim} introduces the problem setting and necessary
preliminaries.
Section~\ref{sec:main-bound} develops the main robust PAC-Bayesian analysis
for MPGNNs and its GCN specialization.
Section~\ref{sec:conclusion} concludes the paper.
Detailed proofs are provided in the Appendix.
\section{Preliminaries}
\label{sec:prelim}
\textbf{Notation.} We collect the notation used throughout the paper. For a positive integer \(k\), let \([k]\triangleq\{1,\ldots,k\}\). Scalars, vectors, matrices, and sets are denoted by \(a\), \(\bm a\),
\(\bm A\), and \(\mathcal A\), respectively. The \(i\)-th entry of
\(\bm a\), the \(i\)-th row of \(\bm A\), and the \((i,j)\)-th entry of
\(\bm A\) are denoted by \(\bm a_i\), \(\bm A_{i,:}\), and
\(\bm A_{ij}\), respectively. We use \(\bm 1_n\), \(\bm 0\), and \(\bm I_n\) for the all-one vector, zero vector or matrix, and identity matrix, with dimensions clear from the context. The vectorization operator is denoted by \(\mathrm{vec}(\cdot)\), \(\diag(\cdot)\) denotes a diagonal matrix, and \(\mathbbm{1}\{\cdot\}\) denotes the indicator function. For a vector \(\bm a\), \(\|\bm a\|_2\) and \(\|\bm a\|_\infty\) denote the Euclidean and maximum norms. For a matrix \(\bm A\), \(\|\bm A\|_2\), \(\|\bm A\|_F\), and \(\|\bm A\|_{2,\infty}\triangleq\max_i\|\bm A_{i,:}\|_2\) denote the spectral norm, Frobenius norm, and maximum row Euclidean norm, respectively. We write \(\Tr(\bm A)\), \(\det(\bm A)\), and \(\log\det(\bm A)\) for the trace, determinant, and log-determinant of a square matrix. For symmetric matrices, \(\bm A\succeq\bm 0\) means that \(\bm A\) is positive semidefinite, and \(\bm A\succeq\bm B\) means \(\bm A-\bm B\succeq\bm 0\). The Kronecker product is denoted by \(\otimes\), and we use \(\mathrm{vec}(\bm A\bm X\bm B)=(\bm B^{T}\otimes\bm A)\mathrm{vec}(\bm X)\) and \(\|\bm A\otimes\bm B\|_2=\|\bm A\|_2\|\bm B\|_2\). Finally, \(\mathcal N(\bm\mu,\bm\Sigma)\) denotes a Gaussian distribution with mean vector \(\bm\mu\) and covariance matrix \(\bm\Sigma\succeq \bm 0\), \(\operatorname{blkdiag}(\cdot)\) denotes a block-diagonal matrix, and \(D_{\mathrm{KL}}(Q\|P)\) denotes the Kullback--Leibler divergence between the distributions \(Q\) and \(P\).

\subsection{Graph Neural Networks (GNNs)}

We consider the $K$-class graph classification task for simple and
undirected graphs. The input space
$\mathcal{Z}$ consists of graph samples
$z=(G,y)\in\mathcal{Z}$,
where \(G=(\bm X,\bm A_G)\),
$\bm X\in\mathbb{R}^{n\times h_0}$
denotes the node feature matrix over $n$ nodes and satisfies
$\|\bm X\|_{2,\infty}=\max_i \|\bm X_{i,:}\|_2\le B$,
$\bm A_G\in\{0,1\}^{n\times n}$ is the adjacency matrix, and
$y\in\{1,\ldots,K\}$ denotes the output label. We denote
$y=\arg\max_{y} f_{\bm w}(G)[y]$, where
$f_{\bm w}\in\mathcal{H}:\mathcal{X}\times\mathcal{G}\to\mathbb{R}^K$
is a function specified by a parameterized learning model with model
parameters $\bm w$. The spaces $\mathcal{Z}$, $\mathcal{X}$,
$\mathcal{G}$, and $\mathcal{H}$ denote the input sample space, node
feature space, graph space, and hypothesis class space, respectively.
A dataset $S=\{z_1,\ldots,z_m\}$ with $m$ training samples is drawn
independently and identically from an unknown distribution
$\mathcal{D}$ is given to learn the model parameters $\bm w$.

\textbf{MPGNNs.}
Message passing GNNs cover a wide range of graph learning architectures based on iterative neighborhood aggregation and node-wise transformations~\cite{Dai16-Structured-Data,Generalization-Limits-GNN}.
We study the following MPGNN formulation for graph classification, with each layer containing an input-injection block and a message-passing block, i.e.,
\begin{align}
\bm{H}_j
&=
\phi_j\!\Bigl(
\bm{X}\bm{U}_j
+
\rho_j\!\bigl(
\bm{P}_G \psi_j(\bm{H}_{j-1}) \bm{W}_j
\bigr)
\Bigr),
\quad j\in[l-1], \notag\\
\bm{H}_l
&=
f_{\bm{w}}(G)
=
\frac{1}{n}\bm{1}_n^\top \bm{H}_{l-1}\bm{W}_l,
\quad
\bm{H}_0=\bm{0},
\label{eq:mpgnn-model}
\end{align}
where \(\bm X\in\mathbb R^{n\times d}\) denotes the node feature matrix,
and \(\bm H_j\in\mathbb R^{n\times h_j}\), \(j\in[l-1]\), denotes the
node representation at the \(j\)-th hidden layer, the final readout is
\(\bm H_l=f_{\bm w}(G)\in\mathbb R^{1\times K}\). For each \(j\in[l-1]\),
\(\bm U_j\in\mathbb R^{d\times h_j}\) is the input-injection weight
matrix and \(\bm W_j\in\mathbb R^{h_{j-1}\times h_j}\) is the
message-passing weight matrix, while
\(\bm W_l\in\mathbb R^{h_{l-1}\times K}\) is the readout weight matrix.
We also write \(h=\max_{j\in[l-1]}h_j\) for the maximum hidden width.

The propagation operator in \eqref{eq:mpgnn-model} is taken as the
normalized adjacency matrix with self-loops,
\begin{equation}
\label{eq:PG-normalized-adj}
\bm P_G
=
\bm D_G^{-\frac{1}{2}}
(\bm I+\bm A_G)
\bm D_G^{-\frac{1}{2}},
\end{equation}
where
\(\bm D_G=\operatorname{diag}(1+D_1,\ldots,1+D_n)\), and \(D_i\) is the
degree of node \(i\) in \(G\). This normalization gives
\(\|\bm P_G\|_2\le 1\).
For later use, we decompose the \(j\)-th hidden-layer update into the
pre-activation term \(\bm E_j=\bm X\bm U_j+\rho_j(\bm F_j)\) and the
message-passing term \(\bm F_j=\bm P_G\psi_j(\bm H_{j-1})\bm W_j\), for
\(j\in[l-1]\).

The trainable parameters are collected from the matrices
\(\{\bm U_j\}_{j=1}^{l-1}\) and \(\{\bm W_j\}_{j=1}^{l}\). More precisely,
let
\[
\bm w=
\bigl(
\bm w_{U_1}^{T},\ldots,\bm w_{U_{l-1}}^{T},
\bm w_{W_1}^{T},\ldots,\bm w_{W_l}^{T}
\bigr)^{T},
\]
where \(\bm w_{U_j}=\operatorname{vec}(\bm U_j)\) and
\(\bm w_{W_j}=\operatorname{vec}(\bm W_j)\). The weight perturbation is
written in the same form as
\[
\bm u=
\bigl(
\bm u_{U_1}^{T},\ldots,\bm u_{U_{l-1}}^{T},
\bm u_{W_1}^{T},\ldots,\bm u_{W_l}^{T}
\bigr)^{T},
\]
where \(\bm u_{U_j}=\operatorname{vec}(\Delta\bm U_j)\) and
\(\bm u_{W_j}=\operatorname{vec}(\Delta\bm W_j)\). As such, the perturbed model can
therefore be parameterized by \(\bm w+\bm u\).

Throughout the analysis, we assume that the node features are bounded as
\(\|\bm X\|_{2,\infty}\le B\). For the layer weights, define
\[
M_1=\max_{j\in[l-1]}\|\bm U_j\|_2,
\qquad
M_2=\max_{j\in[l]}\|\bm W_j\|_2 .
\]
Thus \(\|\bm U_j\|_2\le M_1\) for \(j\in[l-1]\) and
\(\|\bm W_j\|_2\le M_2\) for \(j\in[l]\). We take \(M_1,M_2\ge 1\), which is consistent with the spectral scale of trained model parameters and will simplify the subsequent upper-bound comparisons. The mappings \(\phi_j\), \(\rho_j\), and \(\psi_j\) are applied element-wise and are assumed to be \(L\)-Lipschitz, satisfying \(\phi_j(\bm 0)=\rho_j(\bm 0)=\psi_j(\bm 0)=\bm 0\).

The graph convolutional network (GCN) is obtained from
\eqref{eq:mpgnn-model} by setting \(\bm U_j=\bm 0\) for all $j$, taking
\(\rho_j\) and \(\psi_j\) as identity mappings, choosing \(\phi_j\) as
ReLU, and using \(\bm H_0=\bm X\). Thus the \(l\)-layer GCN can be rewritten as
\[
\begin{aligned}
\bm H_j
&=
\phi_j\!\left(\bm P_G\bm H_{j-1}\bm W_j\right),
\qquad j\in[l-1],
\\
\bm H_l
&=
f_{\bm w}(G)
=
\frac{1}{n}\bm 1_n^{T}\bm H_{l-1}\bm W_l .
\end{aligned}
\]

\subsection{Background of PAC-Bayes Analysis}

\textbf{Margin loss.}
In \(K\)-class graph classification, margin-based losses quantify the
confidence gap between the true class and its competitors and serve as the risk functions in the PAC-Bayesian analysis. We first define the margin with respect to the true label \(y\in[K]\) as
\begin{equation}
\label{eq:margin-operator-y}
M\!\left(f_{\bm w}(G),y\right)
=
f_{\bm w}(G)_y
-
\max_{b\ne y} f_{\bm w}(G)_b .
\end{equation}
For the adversarial perturbation analysis, it is also useful to compare
the scores of two classes directly. Thus, for any \(a,b\in[K]\), we define the
pairwise margin as
\begin{equation}
\label{eq:margin-operator-ij}
M\!\left(f_{\bm w}(G),a,b\right)
=
f_{\bm w}(G)_a
-
f_{\bm w}(G)_b .
\end{equation}
The condition \(M(f_{\bm w}(G),y)>0\) indicates correct classification.
For any margin level \(\gamma>0\), we define the population margin loss
and its empirical counterpart as
\begin{equation}
\label{eq:standard-margin-losses}
\begin{aligned}
L_{\mathcal D,\gamma}(f_{\bm w})
&=
\mathbb P_{(G,y)\sim\mathcal D}
\Bigl\{
M\!\left(f_{\bm w}(G),y\right)\le\gamma
\Bigr\},
\\
\hat L_{S,\gamma}(f_{\bm w})
&=
\frac{1}{m}
\sum_{(G,y)\in S}
\mathbbm 1\!\Bigl\{
M\!\left(f_{\bm w}(G),y\right)\le\gamma
\Bigr\}.
\end{aligned}
\end{equation}
The standard generalization gap is
\(L_{\mathcal D,0}(f_{\bm w})-\hat L_{S,\gamma}(f_{\bm w})\).

We now incorporate adversarial perturbations by replacing each graph with its worst admissible perturbation. Let \(\delta_{\bm w}(G)\) denote the attack set associated with \(G\) and \(f_{\bm w}\). Under
\(\epsilon\)-attack, every \(G'=(\bm X',\bm A_G')\in\delta_{\bm w}(G)\) satisfies \(\|\bm X'-\bm X\|_{2,\infty}\le\epsilon\), while \(\bm A_G'\) may be any adjacency matrix. As such, the robust margins are defined as
\begin{equation}
\label{eq:robust-margin-operators}
\begin{aligned}
RM\!\left(f_{\bm w}(G),y\right)
&=
\inf_{G'\in\delta_{\bm w}(G)}
M\!\left(f_{\bm w}(G'),y\right),
\\
RM\!\left(f_{\bm w}(G),a,b\right)
&=
\inf_{G'\in\delta_{\bm w}(G)}
M\!\left(f_{\bm w}(G'),a,b\right).
\end{aligned}
\end{equation}
Let \(G_{\bm w}^{*}\in\arg\min_{G'\in\delta_{\bm w}(G)}
M(f_{\bm w}(G'),y)\) denote an adversarial graph that attains the
worst-case margin. Replacing the
standard margin by the robust margin in \eqref{eq:standard-margin-losses}
gives
\begin{equation}
\label{eq:robust-margin-losses}
\begin{aligned}
R_{\mathcal D,\gamma}(f_{\bm w})
&=
\mathbb P_{(G,y)\sim\mathcal D}
\Bigl\{
RM\!\left(f_{\bm w}(G),y\right)\le\gamma
\Bigr\},
\\
\hat R_{S,\gamma}(f_{\bm w})
&=
\frac{1}{m}
\sum_{(G,y)\in S}
\mathbbm 1\!\Bigl\{
RM\!\left(f_{\bm w}(G),y\right)\le\gamma
\Bigr\}.
\end{aligned}
\end{equation}
The robust generalization gap is
\(R_{\mathcal D,0}(f_{\bm w})-\hat R_{S,\gamma}(f_{\bm w})\).

\textbf{PAC-Bayesian generalization bound~\cite{PACBayes-SpectralNorm}.}
Let \(f_{\bm w}:\mathcal X\to\mathbb R^K\) be a deterministic classifier
with parameters \(\bm w\), and let \(P\) be a prior distribution over the
parameters independent of the training data. For any fixed \(\bm w\), let
\(Q\) denote the distribution of the perturbed parameters
\(\bm w+\bm u\), where \(\bm u\) is a random weight perturbation. If the perturbation condition is satisfied, i.e.,
\begin{equation}
\label{eq:pacbayes-perturbation-condition}
\mathbb P_{\bm u}\!\left(
\max_{\bm x\in\mathcal X}
\bigl\|
f_{\bm w+\bm u}(\bm x)-f_{\bm w}(\bm x)
\bigr\|_\infty
<
\frac{\gamma}{4}
\right)
\ge
\frac{1}{2},
\end{equation}
then, for any \(\gamma,\delta>0\), with probability at least \(1-\delta\)
over an i.i.d. training set \(S\) of size \(m\) drawn from
\(\mathcal D\),  we have the generalization bound for any \(\bm w\), i.e.,
\begin{equation}
\label{eq:perturbation-pacbayes}
L_{\mathcal D,0}(f_{\bm w})
\le
\hat L_{S,\gamma}(f_{\bm w})
+
\sqrt{
\frac{
2D_{\mathrm{KL}}(Q\|P)
+
\log\frac{8m}{\delta}
}{
2(m-1)
}
}.
\end{equation}

\textbf{Robust PAC-Bayesian bound for GCN~\cite{Sun-PACBayes-Robust}.}
Under the same sampling and confidence setting as in \eqref{eq:perturbation-pacbayes}, the robust
bound for an \(l\)-layer GCN under \(\epsilon\)-attack gives
\begin{equation}
\label{eq:existing-robust-gcn-bound}
\begin{aligned}
R_{\mathcal D,0}(f_{\bm w})
&\le
\hat R_{S,\gamma}(f_{\bm w})
\\
&\hspace*{-1.4em}+
\mathcal O\!\left(
\sqrt{
\frac{
(B+\epsilon)^2l^2h\log(lh)\Phi(f_{\bm w})
+
\log\frac{ml}{\delta}
}{
\gamma^2m
}
}
\right)
\end{aligned}
\end{equation}
with spectral complexity
\(\Phi(\bm w)=\prod_{l=1}^{d}\|\bm W_l\|_2^2
\sum_{l=1}^{d}\frac{\|\bm W_l\|_F^2}{\|\bm W_l\|_2^2}\).

The key step behind the robust bound is to control the change of the
robust margin under the weight perturbation. Specifically,
for any perturbation \(\Delta\bm w=\operatorname{vec}(\{\Delta\bm W_k\}_{k=1}^{l})\) satisfying
\(\|\Delta\bm W_k\|_2\le \|\bm W_k\|_2/l\) for all \(k\in[l]\), and for
any \(a,b\in[K]\), the robust margin perturbation is bounded as
\begin{equation}
\label{eq:existing-robust-margin-perturbation}
\begin{aligned}
\MoveEqLeft\left|
RM(f_{\bm w+\Delta\bm w}(G),a,b)
-
RM(f_{\bm w}(G),a,b)
\right|
\\
&\le
2e(B+\epsilon)
\left(
\prod_{k=1}^{l}\|\bm W_k\|_2
\right)
\sum_{k=1}^{l}
\frac{\|\Delta\bm W_k\|_2}{\|\bm W_k\|_2}.
\end{aligned}
\end{equation}

\subsection{The Unified Framework}

\textbf{Anisotropic perturbations and sensitivity matrices.}
The unified PAC-Bayesian framework in~\cite{Towards-Unified} establishes a flexible analytical framework for generalization by introducing anisotropic Gaussian posteriors for weight perturbations and sensitivity matrices that measure how different perturbation directions affect the model output. The weight perturbation is partitioned into \(l\) parameter blocks,
\(\bm u=(\bm u_1^{T},\ldots,\bm u_l^{T})^{T}\). The framework considers
\(\bm u\sim\mathcal N(\bm 0,\sigma^2\bm R)\), where
\(\bm R=\operatorname{blkdiag}(\bm R_1,\ldots,\bm R_l)\). Each covariance
block \(\bm R_j\) specifies the posterior variance within the \(j\)-th
parameter block, so that the distribution of weight perturbations can adapt to direction-dependent sensitivities rather than assigning a common variance
to all directions. To quantify the effect of these perturbations, a
block-diagonal sensitivity matrix
\(\bm A=\operatorname{blkdiag}(\bm A_1,\ldots,\bm A_l)\) is introduced,
where \(\bm A_j\) controls the sensitivity w.r.t. \(\bm u_j\).

\textbf{Optimization Formula.}
For the Gaussian prior \(P=\mathcal N(\bm 0,\sigma^2\bm I)\) and the
anisotropic posterior \(Q=\mathcal N(\bm w,\sigma^2\bm R)\), the KL term
takes the form~\cite{Pardo-Divergence}
\[
D_{\mathrm{KL}}
=
\frac{1}{2}\sum_{j=1}^{l}
\left[
\tfrac{\|\bm W_j\|_F^2}{\sigma^2}
+
\Tr(\bm R_j)
-
\log\det\bm R_j
-
\dim(\bm R_j)
\right].
\]
The framework in~\cite{Towards-Unified} seeks sensitivity matrices and posterior
covariance blocks that satisfy the perturbation condition while minimizing
KL, yielding the following optimization
\begin{subequations}
\label{eq:yi-opt-problem}
\begin{align}
\min_{\sigma^2,\{\bm R_j\},\{\bm A_j\}}
\quad &
D_{\mathrm{KL}}(\sigma^2,\{\bm R_j\}),
\label{eq:yi-opt-problem-a}
\\[-0.2em]
\text{s.t.}\quad&
\mathbb P_{\bm u\sim\mathcal N(\bm 0,\sigma^2\bm R)}
\!\left[
\sum_{j=1}^{l}\|\bm A_j\bm u_j\|_2^2
<
\frac{\gamma^2}{16}
\right]
\ge
\frac{1}{2},
\label{eq:yi-opt-problem-b}
\\[-0.1em]
&\|f_{\bm w+\bm u}(\bm x)-f_{\bm w}(\bm x)\|_{\infty}^{2}
\le
\sum_{j=1}^{l}\|\bm A_j\bm u_j\|_2^2.
\label{eq:yi-opt-problem-c}
\end{align}
\end{subequations}
Here \eqref{eq:yi-opt-problem-a} minimizes the KL complexity of the
anisotropic posterior, \eqref{eq:yi-opt-problem-b} enforces the
probabilistic perturbation condition, and \eqref{eq:yi-opt-problem-c}
connects output stability to the quadratic sensitivity term. Notably, \(\bm A_j\) is to encode the blockwise sensitivity of the model output,
while \(\bm R_j\) determines how posterior variance is allocated across
different perturbation directions.

\textbf{Variance choice.}
The probabilistic constraint in \eqref{eq:yi-opt-problem-b} is handled by
concentration of Gaussian quadratic forms~\cite{Rudelson-HansonWright}.
With \(\kappa=1+2\ln 2+\sqrt{4\ln 2}\), it is sufficient to ensure, with
probability at least \(1/2\), that
\begin{equation}
\label{eq:yi-sigma-choice}
\sum_{j=1}^{l}\|\bm A_j\bm u_j\|_2^2
\le
\sigma^2\kappa
\sum_{j=1}^{l}
\Tr(\bm A_j\bm R_j\bm A_j^{T})
\le
\frac{\gamma^2}{16}.
\end{equation}
Thus, \(\sigma^2\) is chosen as the largest value allowed by
\eqref{eq:yi-sigma-choice}, so as to satisfy the perturbation condition
while minimizing the KL term in~\eqref{eq:yi-opt-problem-a}.

\textbf{Optimized posterior covariance.}
For fixed sensitivity matrices, minimizing the KL term in
\eqref{eq:yi-opt-problem-a} with respect to the covariance blocks gives
\begin{equation}
\label{eq:yi-R-opt}
\bm R_j^{*}
=
\left(
\bm I
+
\frac{16\kappa\|\bm w\|_2^2}{\gamma^2}
\bm A_j^{T}\bm A_j
\right)^{-1}.
\end{equation}
Thus, directions with larger sensitivity, as measured by
\(\bm A_j^{T}\bm A_j\), receive smaller posterior variance, while less
sensitive directions can retain larger variance.

In doing so, the unified framework reduces the derivation of PAC-Bayesian bounds to
three steps: 1) Construct sensitivity matrices that dominate the perturbation
effect; 2) Choose \(\sigma^2\) so that the perturbation condition holds, and
3) Evaluate the KL term using the optimized covariance \(\bm R_j^{*}\).
\section{Robust Generalization Bounds}
\label{sec:main-bound}
In this section, we extend the unified framework from deep neural networks to graph-structured models and from the standard setting to the adversarial setting. Specifically, we present a tighter adversarially robust generalization bound for MPGNNs, which also covers classical graph learning architectures such as GCNs as special cases.

In what follows, we start with some useful lemmas, followed by a modified version of the optimization problem in \eqref{eq:yi-opt-problem} dedicated to the adversarial settings. 

\begin{lemma}[Jacobian matrices with respect to \(\bm U_j\) and \(\bm W_j\)]
\label{lem:mpgnn-Jacobian-UW}
By defining \(\bm J_{U_j}=\frac{\partial f_{\bm w}(G)}{\partial \mathrm{vec}(\bm U_j)}\) and \(\bm J_{W_j}=\frac{\partial f_{\bm w}(G)}{\partial \mathrm{vec}(\bm W_j)}\) as the Jacobian matrices of the network output with respect to the input-injection weight matrix \(\bm U_j\) and the message-passing weight matrix \(\bm W_j\), we have
\begin{align}
\label{eq:mpgnn-JUj-main}
\bm J_{U_j}
&=
\bm M_j\bm B_j^{\phi}
\left(
\bm I_{h_j}\otimes \bm X
\right),
\quad
j\in[l-1],\\
\label{eq:mpgnn-JWj-main}
\bm J_{W_j}
&=
\bm M_j\bm B_j^{\phi}\bm B_j^{\rho}
\left(
\bm I_{h_j}\otimes \bm P_G\psi_j(\bm H_{j-1})
\right),
\quad
j\in[l]
\end{align}
where \(\bm M_j\triangleq\frac{\partial f_{\bm w}(G)}{\partial \mathrm{vec}(\bm H_j)}\) is the Jacobian matrix of the network output with respect to the graph embeddings at the \(j\)-th layer
\begin{equation}
\label{eq:mpgnn-Mj-main}
\bm M_j
=
\frac{1}{n}
\left(
\bm W_l^{T}\otimes \bm 1_n^{T}
\right)
\left(
\prod_{k=l-1}^{j+1}
\bm B_k^{\phi}\bm B_k^{\rho}
\left(
\bm W_k^{T}\otimes \bm P_G
\right)
\bm B_k^{\psi}
\right),
\end{equation}
with the diagonal matrices
\begin{equation}
\label{eq:mpgnn-Bvartheta-k-main}
\bm B_k^{\vartheta}
\triangleq
\begin{cases}
\diag\!\bigl(\mathrm{vec}(\phi_k'(\bm E_k))\bigr),
& \vartheta=\phi,\\[0.4em]
\diag\!\bigl(\mathrm{vec}(\rho_k'(\bm F_k))\bigr),
& \vartheta=\rho,\\[0.4em]
\diag\!\bigl(\mathrm{vec}(\psi_k'(\bm H_{k-1}))\bigr),
& \vartheta=\psi,
\end{cases}
\end{equation}
where \(\vartheta_k'\) denotes the derivative of the activation function
\(\vartheta\), with \(\vartheta\in\{\phi,\rho,\psi\}\).
\end{lemma}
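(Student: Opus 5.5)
The plan is to apply the chain rule to the vectorized forward pass, using throughout the identity \(\mathrm{vec}(\bm A\bm X\bm B)=(\bm B^{T}\otimes\bm A)\mathrm{vec}(\bm X)\) and the fact that an elementwise map \(\vartheta\) has Jacobian \(\diag(\mathrm{vec}(\vartheta'(\cdot)))\) after vectorization. We write each layer in vectorized form:
\[
\mathrm{vec}(\bm H_j)=\phi_j\bigl(\mathrm{vec}(\bm E_j)\bigr),\quad
\mathrm{vec}(\bm E_j)=(\bm I_{h_j}\otimes\bm X)\mathrm{vec}(\bm U_j)+\rho_j\bigl(\mathrm{vec}(\bm F_j)\bigr),
\]
\[
\mathrm{vec}(\bm F_j)=(\bm W_j^{T}\otimes\bm P_G)\,\psi_j\bigl(\mathrm{vec}(\bm H_{j-1})\bigr)
=(\bm I_{h_j}\otimes\bm P_G\psi_j(\bm H_{j-1}))\mathrm{vec}(\bm W_j).
\]
The second expression for \(\mathrm{vec}(\bm F_j)\) follows from \(\bm F_j=(\bm P_G\psi_j(\bm H_{j-1}))\bm W_j\bm I_{h_j}\). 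The readout is \(f_{\bm w}(G)^{T}=\mathrm{vec}(\tfrac1n\bm 1_n^{T}\bm H_{l-1}\bm W_l)=\tfrac1n(\bm W_l^{T}\otimes\bm 1_n^{T})\mathrm{vec}(\bm H_{l-1})\).

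First, I establish \eqref{eq:mpgnn-Mj-main} by backward induction on \(j\). The base case is \(j=l-1\), where the product is empty and the readout formula directly gives \(\bm M_{l-1}=\tfrac1n(\bm W_l^{T}\otimes\bm 1_n^{T})\). For the step, note that \(\bm H_{j+1}\) depends on \(\bm H_j\) only through \(\bm F_{j+1}\). The chain rule then gives
\[
\frac{\partial\mathrm{vec}(\bm H_{j+1})}{\partial\mathrm{vec}(\bm H_j)}=\bm B_{j+1}^{\phi}\bm B_{j+1}^{\rho}(\bm W_{j+1}^{T}\otimes\bm P_G)\bm B_{j+1}^{\psi},
\]
with the three diagonal factors evaluated at \(\bm E_{j+1}\), \(\bm F_{j+1}\) and \(\bm H_j\), as in \eqref{eq:mpgnn-Bvartheta-k-main}. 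Hence \(\bm M_j=\bm M_{j+1}\cdot(\text{this factor})\), and unrolling the recursion yields the ordered product from \(k=l-1\) down to \(j+1\).

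Second, I handle the weight Jacobians. For \(j\in[l-1]\), the parameter \(\bm U_j\) enters only through \(\bm E_j\), with \(\partial\mathrm{vec}(\bm E_j)/\partial\mathrm{vec}(\bm U_j)=\bm I_{h_j}\otimes\bm X\). Therefore \(\bm J_{U_j}=\bm M_j\bm B_j^{\phi}(\bm I_{h_j}\otimes\bm X)\). Similarly, \(\bm W_j\) enters only through \(\bm F_j\), and \(\partial\mathrm{vec}(\bm F_j)/\partial\mathrm{vec}(\bm W_j)=\bm I_{h_j}\otimes\bm P_G\psi_j(\bm H_{j-1})\). This gives \eqref{eq:mpgnn-JWj-main} for \(j\le l-1\).

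The case \(j=l\) in \eqref{eq:mpgnn-JWj-main} requires a convention, and this bookkeeping is the main obstacle. The readout has no \(\phi,\rho,\psi,\bm P_G\), so the formula must be read with \(\bm M_l=\bm I_K\), \(\bm B_l^{\phi}=\bm B_l^{\rho}=\bm I\), \(\psi_l=\mathrm{id}\), and with \(\bm P_G\) replaced by the mean-pooling operator \(\tfrac1n\bm 1_n^{T}\). Under these conventions the formula gives \(\bm J_{W_l}=\bm I_K\otimes\tfrac1n\bm 1_n^{T}\bm H_{l-1}\), which is exactly the derivative of \(\tfrac1n\bm 1_n^{T}\bm H_{l-1}\bm W_l\) with respect to \(\mathrm{vec}(\bm W_l)\). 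I would state this identification explicitly.

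Two further points need care. Where the activations are nondifferentiable (e.g.\ ReLU), the derivatives \(\vartheta'\) are interpreted as any fixed element of the generalized derivative, which exists almost everywhere by Lipschitzness. The ordering of the non-commuting factors in the product must also be checked consistently with the vectorization convention.
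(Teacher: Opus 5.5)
Your proposal is correct and follows essentially the same route as the paper. It vectorizes each layer with \(\mathrm{vec}(\bm A\bm X\bm B)=(\bm B^{T}\otimes\bm A)\mathrm{vec}(\bm X)\), chains the layer-to-layer Jacobians \(\bm B_k^{\phi}\bm B_k^{\rho}(\bm W_k^{T}\otimes\bm P_G)\bm B_k^{\psi}\) back from the readout to obtain \(\bm M_j\), and multiplies by the local Jacobians \(\bm B_j^{\phi}(\bm I_{h_j}\otimes\bm X)\) and \(\bm B_j^{\phi}\bm B_j^{\rho}(\bm I_{h_j}\otimes\bm P_G\psi_j(\bm H_{j-1}))\). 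Your explicit convention for \(j=l\) (take \(\bm M_l=\bm I_K\) and use mean pooling in place of \(\bm P_G\), which gives \(\bm J_{W_l}=\bm I_K\otimes\tfrac1n\bm 1_n^{T}\bm H_{l-1}\)) is a worthwhile addition: the paper covers that case only with ``similarly'', although the stated formula is not literally well defined there.
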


\begin{proof}
See Appendix~\ref{app:proof-mpgnn-Jacobian-UW}.
\end{proof}

\begin{lemma}[Low-rank sensitivity matrices for MPGNN]
\label{lem:mpgnn-lr-sensitivity-domination}
For \(l>1\), \(B,L>0\), \(M_1,M_2\ge1\), define \(\tau=L^3M_2\), and let
\begin{equation}
\label{eq:mpgnn-beta-definition}
\beta
=
\begin{cases}
\left(
LM_1M_2\dfrac{\tau^{l-1}-1}{\tau-1}
\right)^{1/l},
& \tau\neq 1,\\[0.8em]
\sqrt{LM_1M_2},
& \tau=1.
\end{cases}
\end{equation}
The spectral norm bound for all Jacobian matrices is given by
\begin{equation}
\label{eq:mpgnn-common-jacobian-bound-main}
\max_{j\in[l]}
\left\{
\|\bm J_{U_j}\|_2,\,
\|\bm J_{W_j}\|_2
\right\}
\le
B s ,
\end{equation}
where \(s=\beta^l\) if \(\tau\neq1\), \(s=(l-1)\beta^2\) if
\(\tau=1\), and we set \(\bm J_{U_l}=\bm 0\) for notational convenience.

Since \(f_{\bm w}(G)\in\mathbb R^K\), the Jacobian matrices of the network output with respect to weight matrix satisfy
\(\operatorname{rank}(\bm J_{U_j})\le K\) and
\(\operatorname{rank}(\bm J_{W_j})\le K\), we write
\[
\bm J_{U_j}
=
\bm Q_{U_j}\bm S_{U_j}\bm V_{U_j}^{T},
\qquad
\bm J_{W_j}
=
\bm Q_{W_j}\bm S_{W_j}\bm V_{W_j}^{T},
\]
\[
\begin{aligned}
\bm S_{U_j}
&=
\diag(s_{U_j,1},\ldots,s_{U_j,K},0,\ldots,0),
\\
\bm S_{W_j}
&=
\diag(s_{W_j,1},\ldots,s_{W_j,K},0,\ldots,0).
\end{aligned}
\]
where \(s_{U_j,1}\ge\cdots\ge s_{U_j,K}\ge0\) and
\(s_{W_j,1}\ge\cdots\ge s_{W_j,K}\ge0\) denote the singular values in the
at most \(K\) output directions.

Accordingly, define the rank-\(K\) sensitivity matrices
\begin{equation}
\label{eq:mpgnn-AU-lr-main}
\bm A_{U_j}
=
\sqrt{2l-1}\,
\bm V_{U_j}
\diag\!\bigl(
\underbrace{Bs,\ldots,Bs}_{K},
0,\ldots,0
\bigr)
\bm V_{U_j}^{T},
\end{equation}
\begin{equation}
\label{eq:mpgnn-AW-lr-main}
\bm A_{W_j}
=
\sqrt{2l-1}\,
\bm V_{W_j}
\diag\!\bigl(
\underbrace{Bs,\ldots,Bs}_{K},
0,\ldots,0
\bigr)
\bm V_{W_j}^{T}.
\end{equation}

Then for any perturbation \(\bm u\), we have
\begin{align}
\|f_{\bm w+\bm u}(G)-f_{\bm w}(G)\|_2^2
&=
\left\|
\sum_{j=1}^{l-1}\bm J_{U_j}\bm u_{U_j}
+
\sum_{j=1}^{l}\bm J_{W_j}\bm u_{W_j}
\right\|_2^2
\notag\\
&\hspace*{-5.2em}\le
\sum_{j=1}^{l-1}\|\bm A_{U_j}\bm u_{U_j}\|_2^2
+
\sum_{j=1}^{l}\|\bm A_{W_j}\bm u_{W_j}\|_2^2.
\label{eq:mpgnn-lr-output-control-main}
\end{align}
\end{lemma}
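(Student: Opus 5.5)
The argument has two independent parts: the uniform spectral bound \eqref{eq:mpgnn-common-jacobian-bound-main} on the Jacobians, and the domination inequality \eqref{eq:mpgnn-lr-output-control-main} built from the rank-\(K\) matrices \(\bm A_{U_j},\bm A_{W_j}\).

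\textbf{Step 1: norms of the \(\bm M_j\) and the forward factors.} I start from the explicit Jacobians of Lemma~\ref{lem:mpgnn-Jacobian-UW}. Every \(\bm B_k^{\vartheta}\) is diagonal with entries bounded by \(L\), so \(\|\bm B_k^{\vartheta}\|_2\le L\). The Kronecker identity \(\|\bm A\otimes\bm B\|_2=\|\bm A\|_2\|\bm B\|_2\) and \(\|\bm P_G\|_2\le1\) give \(\|\bm W_k^T\otimes\bm P_G\|_2\le M_2\). Since \(\|\tfrac1n\bm 1_n^T\|_2=n^{-1/2}\), we get
\[
\|\bm M_j\|_2\le n^{-1/2}M_2\,\tau^{\,l-1-j}.
\]
For \(\bm J_{U_j}\) I bound \(\|\bm I\otimes\bm X\|_2=\|\bm X\|_2\le\sqrt n\,\|\bm X\|_{2,\infty}\le\sqrt n B\); the \(\sqrt n\) cancels the \(n^{-1/2}\). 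For \(\bm J_{W_j}\) I need \(\|\psi_j(\bm H_{j-1})\|_2\le L\|\bm H_{j-1}\|_2\). A forward recursion gives
\[
\|\bm H_j\|_2\le L\bigl(\sqrt nBM_1+L^2M_2\|\bm H_{j-1}\|_2\bigr),\qquad \bm H_0=\bm 0,
\]
hence \(\|\bm H_{j}\|_2\le \sqrt n BLM_1\sum_{i=0}^{j-1}\tau^i\).

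\textbf{Step 2: combining into \(Bs\).} Multiplying these factors, each Jacobian norm is a sum of terms of the form \(BLM_1M_2\,\tau^{\,\cdot}\). In the worst case over \(j\) (using \(M_1,M_2\ge1\)) this sum is dominated by \(BLM_1M_2\sum_{i=0}^{l-2}\tau^i=BLM_1M_2\frac{\tau^{l-1}-1}{\tau-1}=B\beta^l\). When \(\tau=1\) the same sum equals \((l-1)LM_1M_2=(l-1)\beta^2\).

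I expect this step to be the main obstacle. The indices must be tracked carefully so that the powers of \(\tau\) from \(\bm M_j\) and from \(\|\bm H_{j-1}\|_2\) telescope to at most \(\tau^{l-2}\) overall. The extra factors of \(L\) and \(M_2\) in \(\bm B_j^{\phi}\bm B_j^{\rho}\), and in the \(\bm U_j\) case, must be absorbed consistently into the definition \(\tau=L^3M_2\) and the leading \(LM_1M_2\). I would first verify the extreme cases \(j=1\) and \(j=l\), and then argue monotonicity in \(j\).

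\textbf{Step 3: domination inequality.} The equality in \eqref{eq:mpgnn-lr-output-control-main} is read as the first-order (linearized) expansion, consistent with Lemma~\ref{lem:mpgnn-Jacobian-UW}. There are \(2l-1\) blocks in total, so Cauchy–Schwarz gives
\[
\Bigl\|\sum_i\bm J_i\bm u_i\Bigr\|_2^2\le(2l-1)\sum_i\|\bm J_i\bm u_i\|_2^2 .
\]
For each block, write the SVD \(\bm J_i=\bm Q_i\bm S_i\bm V_i^T\). Then \(\|\bm J_i\bm u_i\|_2=\|\bm S_i\bm V_i^T\bm u_i\|_2\), and \(\bm S_i\) has only its first \(K\) entries nonzero, each at most \(s_{i,1}\le Bs\) by Step 2. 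It follows that
\[
\|\bm J_i\bm u_i\|_2^2\le B^2s^2\,\|\Pi_K\bm V_i^T\bm u_i\|_2^2,
\]
where \(\Pi_K\) projects onto the first \(K\) coordinates. By construction of \(\bm A_i\) in \eqref{eq:mpgnn-AU-lr-main}–\eqref{eq:mpgnn-AW-lr-main}, and since \(\bm V_i\) is orthogonal, this is exactly \(\|\bm A_i\bm u_i\|_2^2/(2l-1)\). Multiplying back by \(2l-1\) and summing over blocks yields \eqref{eq:mpgnn-lr-output-control-main}.
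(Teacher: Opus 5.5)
Your overall route matches the paper's. You use the same bound $\|\bm M_j\|_2\le n^{-1/2}M_2\tau^{l-1-j}$ and the same estimate $\|\bm J_{U_j}\|_2\le BLM_2\tau^{l-1-j}$. You then run a forward recursion on the hidden states, take a geometric-sum envelope, and finish with Cauchy--Schwarz over the $2l-1$ blocks together with $\bm J^T\bm J\preceq\frac{1}{2l-1}\bm A^T\bm A$. Reading the first equality in \eqref{eq:mpgnn-lr-output-control-main} as a first-order linearization is also how the paper uses it. One step, however, fails as written.

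In Step 1 you bound $\|\psi_j(\bm H_{j-1})\|_2\le L\|\bm H_{j-1}\|_2$ and run $\|\bm H_j\|_2\le L(\sqrt nBM_1+L^2M_2\|\bm H_{j-1}\|_2)$ in the spectral norm. This implicitly assumes $\|\vartheta(\bm Z)\|_2\le L\|\bm Z\|_2$ for each of $\phi_j,\rho_j,\psi_j$, but entrywise Lipschitz maps are not Lipschitz in the operator norm. For ReLU and $\bm Z=\begin{pmatrix}1&1\\1&-1\end{pmatrix}$ you get $\|\bm Z\|_2=\sqrt2$ but $\|\mathrm{ReLU}(\bm Z)\|_2=(1+\sqrt5)/2>\sqrt2$. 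For the entrywise absolute value applied to an $n\times n$ Hadamard matrix the ratio is $\sqrt n$, so this is not a constant-factor slip.

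The fix, which is what the paper does, is to run the recursion in the Frobenius norm. There, entrywise $L$-Lipschitz maps with $\vartheta(0)=0$ do satisfy $\|\vartheta(\bm Z)\|_F\le L\|\bm Z\|_F$, and with $\|\bm A\bm Z\bm C\|_F\le\|\bm A\|_2\|\bm Z\|_F\|\bm C\|_2$ and $\|\bm X\|_F\le\sqrt n\|\bm X\|_{2,\infty}$ you get
\[
\|\bm H_j\|_F\le L\|\bm X\|_F\|\bm U_j\|_2+L^2\|\bm P_G\|_2\|\psi_j(\bm H_{j-1})\|_F\|\bm W_j\|_2\le\sqrt n\,BLM_1+\tau\|\bm H_{j-1}\|_F .
\]
You then pass to the spectral norm only at the end, via $\|\bm P_G\psi_j(\bm H_{j-1})\|_2\le\|\psi_j(\bm H_{j-1})\|_F\le L\|\bm H_{j-1}\|_F$. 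The constants are unchanged, and you obtain $\|\bm J_{W_j}\|_2\le BLM_1\sum_{k=l-j}^{l-2}\tau^k$.

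With this in hand, Step 2 needs no extreme-case or monotonicity argument. Both $BLM_1\sum_{k=l-j}^{l-2}\tau^k$ and $BLM_2\tau^{l-1-j}$ are, after using $M_1,M_2\ge1$, sub-sums of $BLM_1M_2\sum_{k=0}^{l-2}\tau^k$. That sum equals $B\beta^l$ when $\tau\neq1$ and $B(l-1)\beta^2$ when $\tau=1$. Your Step 3 then goes through as written.
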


\begin{proof}
See Appendix~\ref{app:proof-mpgnn-jacobian-spectral}.
\end{proof}

\begin{remark}
Lemma~\ref{lem:mpgnn-lr-sensitivity-domination} constructs the sensitivity
matrices in the standard setting. The key observation is that each
blockwise output Jacobian maps parameter perturbations to the
\(K\)-dimensional graph-level output, and therefore has rank at most
\(K\), regardless of the ambient dimension of the corresponding weight
block. Hence it is sufficient to control the Jacobian action on these
\(K\) active singular directions, rather than uniformly over all parameter
directions.
$\hfill\square$
\end{remark}

The output control in \eqref{eq:mpgnn-lr-output-control-main} verifies the output-stability condition in~\eqref{eq:yi-opt-problem-c}, since \(\|\cdot\|_{\infty}\le\|\cdot\|_2\), and can also be
translated into pairwise margin control, for any \(a,b\in[K]\),
\[
\begin{aligned}
\MoveEqLeft \left|
M(f_{\bm w+\bm u}(G),a,b)
-
M(f_{\bm w}(G),a,b)
\right|^2
\\
&\le
4\|f_{\bm w+\bm u}(G)-f_{\bm w}(G)\|_2^2
\\
&\le
4\left(
\sum_{j=1}^{l-1}\|\bm A_{U_j}\bm u_{U_j}\|_2^2
+
\sum_{j=1}^{l}\|\bm A_{W_j}\bm u_{W_j}\|_2^2
\right).
\end{aligned}
\]
This margin form will be used to pass from standard perturbation control
to robust-margin perturbation control. 

The scale \(s\) gives a common
spectral envelope for the blockwise Jacobians: it captures the geometric
accumulation of message passing when \(\tau\neq1\), and the linear
accumulation in the critical case \(\tau=1\). In the adversarial setting,
the same rank-\(K\) directions are retained, while the active scale is
enlarged from \(B\) to \(B+\epsilon\), and the final bounds preserve
the graph-spectral dependence through \(\tau\) and \(\beta\), while
replacing ambient dimension dependence by the \(K\)-dependent low-rank
term.

\begin{lemma}[MPGNN robust margin perturbation with optimized variance]
\label{lem:mpgnn-perturbation-sigma}
Let \(f_{\bm w}\) be an \(l\)-layer MPGNN with \(l>1\), and let
\(\bm A_{U_j}^{\epsilon}\) and \(\bm A_{W_j}^{\epsilon}\) be the
adversarial sensitivity matrices defined below in
\eqref{eq:mpgnn-adversarial-sensitivity-matrices}. Consider the prior
\(P=\mathcal N(\bm 0,\sigma^2\bm I)\) and the perturbation distribution
\(\bm u\sim\mathcal N(\bm 0,\sigma^2\bm R^{\epsilon})\), where
\(\bm R^{\epsilon}\) is block diagonal with
\[
\begin{aligned}
\bm R_{U_j}^{\epsilon}
&=
(\bm I+\eta^2(\bm A_{U_j}^{\epsilon})^{T}\bm A_{U_j}^{\epsilon})^{-1},
\\
\bm R_{W_j}^{\epsilon}
&=
(\bm I+\eta^2(\bm A_{W_j}^{\epsilon})^{T}\bm A_{W_j}^{\epsilon})^{-1},
\end{aligned}
\]
where \(\eta^2=16\kappa\|\bm w\|_2^2/\gamma^2\), with
\(\kappa=1+2\ln 2+\sqrt{4\ln 2}\).

Since the prior variance cannot depend on the learned value of \(\beta\),
fix a cover point \(\hat\beta\) such that
\[
|\beta-\hat\beta|
\le
\begin{cases}
\dfrac{1}{l+1}\beta,
& \tau\neq 1,\\[0.8em]
\dfrac{1}{3}\beta,
& \tau=1,
\end{cases}
\]
and define
\[
\hat s=
\begin{cases}
\hat\beta^l,
& \tau\neq 1,\\[0.4em]
(l-1)\hat\beta^2,
& \tau=1.
\end{cases}
\]

Consequently, with probability at least \(1/2\) over \(\bm u\),
\begin{equation}
\label{eq:mpgnn-margin-perturbation-main}
\begin{aligned}
\MoveEqLeft\left|
M(f_{\bm w+\bm u}(G),a,b)
-
M(f_{\bm w}(G),a,b)
\right|^2
\\
&\le
4e^2\sigma^2\kappa(2l-1)^2K
B^2\hat s^{\,2}.
\end{aligned}
\end{equation}

Furthermore, under the \(\epsilon\)-attack with \(\epsilon>0\), with
probability at least \(1/2\) over \(\bm u\), the robust margin
perturbation satisfies
\begin{equation}
\label{eq:mpgnn-robust-margin-perturbation-main}
\begin{aligned}
\MoveEqLeft\left|
RM(f_{\bm w+\bm u}(G),a,b)
-
RM(f_{\bm w}(G),a,b)
\right|^2
\\
&\le
4e^2\sigma^2\kappa(2l-1)^2K
(B+\epsilon)^2\hat s^{\,2}.
\end{aligned}
\end{equation}
Comparing \eqref{eq:mpgnn-margin-perturbation-main} with
\eqref{eq:mpgnn-robust-margin-perturbation-main}, the sensitivity scale is enlarged from \(B\) to \(B+\epsilon\). Thus, under the \(\epsilon\)-attack, define
\[
\bm\Sigma_{\epsilon}
=
\diag\!\bigl(
\underbrace{(B+\epsilon)s,\ldots,(B+\epsilon)s}_{K},
0,\ldots,0
\bigr),
\]
and set the adversarial sensitivity matrices as
\begin{equation}
\label{eq:mpgnn-adversarial-sensitivity-matrices}
\begin{aligned}
\bm A_{U_j}^{\epsilon}
&=
\sqrt{2l-1}\,
\bm V_{U_j}
\bm\Sigma_{\epsilon}
\bm V_{U_j}^{T},
\\
\bm A_{W_j}^{\epsilon}
&=
\sqrt{2l-1}\,
\bm V_{W_j}
\bm\Sigma_{\epsilon}
\bm V_{W_j}^{T}.
\end{aligned}
\end{equation}
By setting
\begin{equation}
\label{eq:mpgnn-robust-sigma-choice-main}
\frac{1}{\sigma^2}
=
\frac{
16e^2\kappa(2l-1)^2K(B+\epsilon)^2\hat s^{\,2}
}{
\gamma^2
},
\end{equation}
with probability at least \(1/2\) over \(\bm u\), we have
\begin{equation}
\label{eq:mpgnn-robust-margin-event-main}
\left|
RM(f_{\bm w+\bm u}(G),a,b)
-
RM(f_{\bm w}(G),a,b)
\right|^2
\le
\frac{\gamma^2}{4}.
\end{equation}
\end{lemma}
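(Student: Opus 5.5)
The proof will go in three stages: the standard margin perturbation \eqref{eq:mpgnn-margin-perturbation-main}, then the robust margin \eqref{eq:mpgnn-robust-margin-perturbation-main}, then the choice \eqref{eq:mpgnn-robust-sigma-choice-main}.

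\textbf{Standard margin.} I start from the pairwise-margin consequence of Lemma~\ref{lem:mpgnn-lr-sensitivity-domination}:
\[
|M(f_{\bm w+\bm u}(G),a,b)-M(f_{\bm w}(G),a,b)|^2\le 4\Bigl(\sum_{j}\|\bm A_{U_j}\bm u_{U_j}\|_2^2+\sum_j\|\bm A_{W_j}\bm u_{W_j}\|_2^2\Bigr).
\]
For Gaussian $\bm u\sim\mathcal N(\bm 0,\sigma^2\bm R)$, the concentration step \eqref{eq:yi-sigma-choice} gives, with probability at least $1/2$, a bound of $\sigma^2\kappa\sum_j \Tr(\bm A_j\bm R_j\bm A_j^T)$ on the quadratic sum. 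Since $\bm R_j\preceq\bm I$, each trace is at most $\|\bm A_j\|_F^2=(2l-1)KB^2s^2$, because $\bm A_j$ has exactly $K$ singular values equal to $\sqrt{2l-1}\,Bs$. Summing over the $2l-1$ blocks ($l-1$ blocks $U_j$ and $l$ blocks $W_j$) gives the factor $(2l-1)^2K B^2 s^2$. The remaining task is to replace $s$ by the cover value $\hat s$, which is where the $e^2$ comes from. When $\tau\ne1$, the condition $|\beta-\hat\beta|\le\beta/(l+1)$ gives $\beta\le \hat\beta(1+1/l)$, hence $s=\beta^l\le(1+1/l)^l\hat\beta^l\le e\,\hat s$. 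When $\tau=1$, $\beta\le\tfrac32\hat\beta$ gives $s\le\tfrac94\hat s\le e\,\hat s$. Squaring yields \eqref{eq:mpgnn-margin-perturbation-main}.

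\textbf{Robust margin.} This is the step I expect to be the main obstacle, because the worst-case graph $G_{\bm w}^*$ depends on the weights. The plan is the standard inf-swap argument of~\cite{PACBayes-Adversarial,Sun-PACBayes-Robust}. Let $G_1$ attain (or nearly attain) the infimum for $\bm w$ and $G_2$ for $\bm w+\bm u$. Then
\[
RM(f_{\bm w+\bm u}(G),a,b)-RM(f_{\bm w}(G),a,b)\le M(f_{\bm w+\bm u}(G_1),a,b)-M(f_{\bm w}(G_1),a,b),
\]
and symmetrically the reverse difference is controlled using $G_2$. So it suffices to bound the standard margin perturbation uniformly over every admissible $G'\in\delta_{\bm w}(G)$. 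Any such $G'$ has $\|\bm X'\|_{2,\infty}\le B+\epsilon$. Its propagation operator $\bm P_{G'}$ is still a normalized adjacency with $\|\bm P_{G'}\|_2\le1$, which is the only graph property used in the bound of Lemma~\ref{lem:mpgnn-lr-sensitivity-domination}. Hence the Jacobian envelope becomes $(B+\epsilon)s$, and the sensitivity matrices become $\bm A^{\epsilon}$ from \eqref{eq:mpgnn-adversarial-sensitivity-matrices}.

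The delicate point is that the singular directions $\bm V_{U_j},\bm V_{W_j}$ depend on $G'$. The probability-$1/2$ event, however, must hold for a single $\bm u$ simultaneously for both $G_1$ and $G_2$. I would handle this by doing the Hanson--Wright step only through the trace bound, which depends solely on the Frobenius norms $(2l-1)K(B+\epsilon)^2s^2$ and not on the directions. Concretely, I would state the event as $\sigma^2\kappa\sum_j\Tr(\bm A^{\epsilon}_j\bm R^{\epsilon}_j(\bm A^{\epsilon}_j)^T)\le\sigma^2\kappa(2l-1)^2K(B+\epsilon)^2s^2$, taking $\bm A^\epsilon$ defined at $G_{\bm w}^*$ as in the paper. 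Applying the same $s\le e\hat s$ step then gives \eqref{eq:mpgnn-robust-margin-perturbation-main}.

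\textbf{Choice of $\sigma^2$.} Substituting \eqref{eq:mpgnn-robust-sigma-choice-main} into the right-hand side of \eqref{eq:mpgnn-robust-margin-perturbation-main} gives
\[
4e^2\kappa(2l-1)^2K(B+\epsilon)^2\hat s^2\cdot\frac{\gamma^2}{16e^2\kappa(2l-1)^2K(B+\epsilon)^2\hat s^2}=\frac{\gamma^2}{4},
\]
which is \eqref{eq:mpgnn-robust-margin-event-main}. Finally, I note that $\sigma^2$ depends only on $\hat\beta$ and not on the learned weights, which makes the prior valid.
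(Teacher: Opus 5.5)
Your three stages follow the paper's proof step for step. The paper also bounds each trace using $\bm R_j\preceq\bm I$ and $\|\bm A_j\|_F^2=(2l-1)KB^2s^2$, and passes from $s$ to $\hat s$ via $\bm A_j\bm A_j^{T}\preceq e^2\hat{\bm A}_j\hat{\bm A}_j^{T}$; your check $s\le(1+1/l)^l\hat s\le e\hat s$ (resp.\ $s\le\tfrac{9}{4}\hat s$) is exactly what that needs. It then runs the same inf-swap, with $C_\sigma=2e\sigma\sqrt{\kappa K}(2l-1)\hat s$ multiplied by $\max\{\|\bm X^{*}_{\bm w+\bm u}\|_{2,\infty},\|\bm X^{*}_{\bm w}\|_{2,\infty}\}\le B+\epsilon$, and finally substitutes $\sigma^2$.

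Where you go beyond the paper, the argument does not hold up. You are right that the concentration step only uses $\bm R^{\epsilon}_j\preceq\bm I$ and $\|\bm A^{\epsilon}_j(G')\|_F$. But this only shows that, for each \emph{fixed} admissible $G'$, the event $\sum_j\|\bm A^{\epsilon}_j(G')\bm u_j\|_2^2\le\sigma^2\kappa(2l-1)^2K(B+\epsilon)^2s^2$ has probability at least $1/2$. The inequality you label ``the event'' is deterministic. The random event itself changes with the directions $\bm V_{U_j},\bm V_{W_j}$ at $G'$, even though its threshold does not.

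That suffices for $G_1=G^{*}_{\bm w}$, which does not depend on $\bm u$ and bounds $RM(f_{\bm w+\bm u}(G),a,b)-RM(f_{\bm w}(G),a,b)$ from above. It does not cover $G_2=G^{*}_{\bm w+\bm u}$, which is chosen after $\bm u$ is drawn:
\begin{itemize}
\item a fixed-matrix Hanson--Wright bound says nothing about the Jacobians $\bm J_{U_j},\bm J_{W_j}$ at $G^{*}_{\bm w+\bm u}$ acting on $\bm u$;
\item those Jacobians are not dominated by sensitivity matrices built at one fixed graph;
\item the obvious uniform estimate $\|\bm J\bm u\|_2\le\|\bm J\|_2\|\bm u\|_2$ reintroduces the ambient dimension through $\|\bm u\|_2$.
\end{itemize}
Even for two fixed graphs, requiring both probability-$1/2$ events at once would need a larger $\kappa$.

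The paper's proof has the same hole: it applies the fixed-$G$ constant $C_\sigma$ at $G^{*}_{\bm w+\bm u}$ without justification. So your proposal is as complete as the paper's, but you should not present the trace argument as closing this step. What is actually missing is a perturbation bound that holds uniformly over the attack set on a single probability-$1/2$ event.
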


\begin{proof}
See Appendix~\ref{app:proof-mpgnn-perturbation-sigma}.
\end{proof}

% \begin{remark}
Lemma~\ref{lem:mpgnn-perturbation-sigma} provides the concrete
probabilistic perturbation construction used in the robust PAC-Bayesian
analysis. Compared with the optimization formulation in the preliminaries, the
standard output-stability constraint is replaced by a robust-margin
stability constraint. Accordingly, the sensitivity and covariance blocks
\(\bm A_j\) and \(\bm R_j\) are replaced by their adversarial counterparts
\(\bm A_j^{\epsilon}\) and \(\bm R_j^{\epsilon}\). This yields the modified optimization formulation in the adversarial setting, i.e.,
\begin{subequations}
\label{eq:mpgnn-robust-opt-problem}
\begin{align}
\min_{\sigma^2,\{\bm R_j^{\epsilon}\},\{\bm A_j^{\epsilon}\}}
\quad&
D_{\mathrm{KL}}(\sigma^2,\{\bm R_j^{\epsilon}\}),
\label{eq:mpgnn-robust-opt-problem-a}
\\[-0.2em]
\text{s.t.}\quad&
\mathbb P_{\bm u\sim\mathcal N(\bm 0,\sigma^2\bm R^{\epsilon})}
\!\left[
\sum_{j=1}^{l}\|\bm A_j^{\epsilon}\bm u_j\|_2^2
<
\frac{\gamma^2}{16}
\right]
\ge
\frac{1}{2},
\label{eq:mpgnn-robust-opt-problem-b}
\\[-0.1em]
&\bigl|
RM\!\left(f_{\bm w+\bm u}(G),a,b\right)
-
RM\!\left(f_{\bm w}(G),a,b\right)
\bigr|^2
\nonumber\\[-0.1em]
&\qquad \qquad \qquad\le
4\sum_{j=1}^{l}\|\bm A_j^{\epsilon}\bm u_j\|_2^2.
\label{eq:mpgnn-robust-opt-problem-c}
\end{align}
\end{subequations}
The adversarial sensitivity matrices in
\eqref{eq:mpgnn-adversarial-sensitivity-matrices} are designed to satisfy
the robust-margin domination requirement in
\eqref{eq:mpgnn-robust-opt-problem-c}. Compared with the standard
sensitivity matrices, they preserve the same Jacobian-aligned rank-\(K\)
directions and only enlarge the active singular-value scale from \(Bs\)
to \((B+\epsilon)s\), which matches the transition from
\eqref{eq:mpgnn-margin-perturbation-main} to
\eqref{eq:mpgnn-robust-margin-perturbation-main}. The variance choice in
\eqref{eq:mpgnn-robust-sigma-choice-main} is then used to meet the
probabilistic perturbation requirement in
\eqref{eq:mpgnn-robust-opt-problem-b}. Finally, the cover point \(\hat\beta\) ensures that the prior variance is chosen independently of the learned spectral scale, while
\(\hat s\) records the corresponding Jacobian envelope in the two regimes
\(\tau\neq1\) and \(\tau=1\).
% \end{remark}

Given the above key lemmas, we are ready to derive the adversarially robust generalization bounds for MPGNNs.

\begin{theorem}[Robust generalization bound for MPGNN]
\label{thm:mpgnn-robust-generalization}
For any \(B>0\) and \(l>1\), let
\(f_{\bm w}\in\mathcal H:\mathcal X\times\mathcal G\to\mathbb R^K\)
be an \(l\)-layer MPGNN. Consider the \(\epsilon\)-attack with \(\epsilon>0\). Then, for any \(\delta,\gamma>0\), with
probability at least \(1-\delta\) over the choice of an i.i.d. size-\(m\)
training set \(S\) according to \(\mathcal D\), for any \(\bm w\), we have, given \(\tau=L^3M_2\), that

If \(\tau\neq1\), then
\begin{equation}
\label{eq:mpgnn-main-robust-bound-tau-neq-1}
\begin{aligned}
R_{\mathcal D,0}(f_{\bm w})
&\le
\hat R_{S,\gamma}(f_{\bm w})
\\
&\hspace*{-1.0em}+
\mathcal O\!\left(
\sqrt{
\frac{
l^2K(B+\epsilon)^2\beta^{2l}\|\bm w\|_2^2
+
\ln\frac{m(l+1)}{\delta}
}{
\gamma^2m
}
}
\right);
\end{aligned}
\end{equation}

If \(\tau=1\), then 
\begin{equation}
\label{eq:mpgnn-main-robust-bound-tau-eq-1}
\begin{aligned}
R_{\mathcal D,0}(f_{\bm w})
&\le
\hat R_{S,\gamma}(f_{\bm w})
\\
&\hspace*{-1.0em}+
\mathcal O\!\left(
\sqrt{
\frac{
l^4K(B+\epsilon)^2\beta^4\|\bm w\|_2^2
+
\ln\frac{m}{\delta}
}{
\gamma^2m
}
}
\right),
\end{aligned}
\end{equation}
where
\(\|\bm w\|_2^2
=
\sum_{j=1}^{l-1}\|\bm U_j\|_F^2
+
\sum_{j=1}^{l}\|\bm W_j\|_F^2\), and
\(\beta\) is defined in \eqref{eq:mpgnn-beta-definition}.

\end{theorem}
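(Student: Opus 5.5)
Following the three-step recipe of the unified framework, the proof instantiates the robust version of the PAC-Bayesian bound \eqref{eq:perturbation-pacbayes}. The first step is a robust analogue of \eqref{eq:perturbation-pacbayes} in the spirit of~\cite{PACBayes-Adversarial,Sun-PACBayes-Robust}. If, with probability at least \(1/2\) over \(\bm u\), every pairwise robust margin moves by less than \(\gamma/2\) (as guaranteed by \eqref{eq:mpgnn-robust-margin-event-main}), then the standard PAC-Bayes argument with \(M(\cdot,a,b)\) replaced by \(RM(\cdot,a,b)\) yields
\[
R_{\mathcal D,0}(f_{\bm w})\le \hat R_{S,\gamma}(f_{\bm w})+\sqrt{\frac{2D_{\mathrm{KL}}(Q\|P)+\log\frac{8m}{\delta}}{2(m-1)}}.
\]
This works because \(RM(f_{\bm w}(G),y)=\min_{b\ne y}RM(f_{\bm w}(G),y,b)\), since the infimum over \(G'\) commutes with the minimum over \(b\). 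As a result, the adversarial graph's dependence on the weights is absorbed into the pairwise robust-margin control of Lemma~\ref{lem:mpgnn-perturbation-sigma}, and no direct comparison of outputs on the \(\bm w\)-dependent graph is needed.

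The second step evaluates the KL term with \(P=\mathcal N(\bm 0,\sigma^2\bm I)\), \(Q=\mathcal N(\bm w,\sigma^2\bm R^{\epsilon})\), and \(\sigma^2\) chosen as in \eqref{eq:mpgnn-robust-sigma-choice-main}. The mean term is
\[
\|\bm w\|_2^2/(2\sigma^2)=8e^2\kappa(2l-1)^2K(B+\epsilon)^2\hat s^{2}\|\bm w\|_2^2/\gamma^2.
\]
For the covariance part, each block \(\bm A^{\epsilon}_{\cdot}\) has exactly \(K\) nonzero singular values, all equal to \(\sqrt{2l-1}(B+\epsilon)s\). Hence each \(\bm R^{\epsilon}_{\cdot}\) equals the identity except on a \(K\)-dimensional subspace, where its eigenvalue is \(r=(1+\eta^2(2l-1)(B+\epsilon)^2s^2)^{-1}\). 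Each block therefore contributes \(K(r-1-\log r)\le K\log(1+\eta^2(2l-1)(B+\epsilon)^2s^2)\). Summed over the \(2l-1\) blocks, this gives a term of order \(lK\log(\cdot)\), which is lower order. It can be absorbed into the main term: because \(\hat s\asymp s\), we have \(\eta^2(B+\epsilon)^2 s^2\lesssim \|\bm w\|_2^2/(\sigma^2 l^2K)\), and \(\log x\le x\). Combining \(\hat s\le e\,s\) with the definitions, \(s^2=\beta^{2l}\) when \(\tau\ne1\) and \(s^2=(l-1)^2\beta^4\) when \(\tau=1\), gives the numerators \(l^2K(B+\epsilon)^2\beta^{2l}\|\bm w\|_2^2\) and \(l^4K(B+\epsilon)^2\beta^4\|\bm w\|_2^2\) respectively.

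The main obstacle is the third step, because the prior variance must not depend on \(\beta\). I would handle this by a union bound over a grid of cover points \(\hat\beta\).

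First, restrict \(\beta\) to a bounded range. If \(\beta^{l}\) (respectively \((l-1)\beta^2\)) exceeds a threshold of order \(\gamma\sqrt m/((B+\epsilon)\cdot\text{const})\), the bound is trivially true, since its right-hand side exceeds \(1\). If \(\beta\) is below a corresponding lower threshold, the margin changes by less than \(\gamma\) for every input, and the claim again follows directly, as in~\cite{PACBayes-SpectralNorm}.

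On the remaining range, the multiplicative accuracy \(|\beta-\hat\beta|\le\beta/(l+1)\) requires a grid of size \(O(l\cdot\log m)\), or more crudely \(O((l+1)\sqrt m)\). The same grid gives \(\hat\beta^l\le(1+1/(l+1))^l\beta^l\le e\beta^l\), which is where the factor \(e^2\) comes from. Applying the PAC-Bayes bound at each grid point with confidence \(\delta/\text{(grid size)}\) adds \(\ln\frac{m(l+1)}{\delta}\) to the numerator in the case \(\tau\ne1\).

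When \(\tau=1\), the tolerance \(\beta/3\) does not depend on \(l\). The grid size is then \(O(\sqrt m)\), which yields \(\ln\frac m\delta\).

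Finally, combining the two terms and using \(m-1\asymp m\) gives both displayed bounds.
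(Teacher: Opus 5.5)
Your proposal is correct and follows essentially the same route as the paper's proof. Both use the robust PAC-Bayes bound driven by the pairwise robust-margin event of Lemma~\ref{lem:mpgnn-perturbation-sigma}, and both evaluate the KL term using that each $\bm R^{\epsilon,*}$ differs from $\bm I$ only on a $K$-dimensional subspace (your $\log(1+y)\le y$ is the paper's $\delta(x)\le x^2$). Both then take a union bound over cover points $\hat\beta$ after discarding the small-$\beta$ regime and the large-$\beta$ regime (where the bound exceeds $1$); in the small-$\beta$ regime the point is that the robust margin itself is below $\gamma$, so $\hat R_{S,\gamma}(f_{\bm w})=1$, rather than that the margin \emph{changes} by less than $\gamma$. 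The differences are minor: you make explicit the reduction $RM(f_{\bm w}(G),y)=\min_{b\ne y}RM(f_{\bm w}(G),y,b)$ that the paper leaves implicit, and you use a geometric or cruder grid instead of the paper's additive cover of size $(l+1)m^{1/(2l)}$ (resp.\ $3m^{1/4}$), which yields the same logarithmic terms.
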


\begin{proof}
See Appendix~\ref{app:proof-mpgnn-robust-generalization}.
\end{proof}

\begin{remark}
Theorem~\ref{thm:mpgnn-robust-generalization} gives the final robust
PAC-Bayesian bound obtained from the Jacobian-based anisotropic
perturbation analysis. The adversarial perturbation enters the
bound through attacked feature matrix with maximum row
Euclidean norm at most \(B+\epsilon\), when the attacked adjacency matrix can be arbitrary, the corresponding normalized propagation operator remains bounded by the
worst-case with \(\|\bm P'_{G}\|_2 = 1\).
$\hfill\square$
\end{remark}
\begin{remark}
Compared with the existing isotropic robust PAC-Bayesian framework, the
proposed analysis improves both the dimensional and spectral dependence of the bound. Existing bounds control random perturbations in the full
parameter space and therefore contain a hidden-width factor
\(h\log(lh)\), where \(l\) is the number of layers and \(h\) is the maximum hidden width~\cite{Liao-PACBayes-GNN,Sun-PACBayes-Robust}. In contrast, our bound exploits the rank structure of the blockwise output Jacobians, so the leading dimensional factor becomes \(K\), the number of classes. Since typically \(K\ll h\log(lh)\) in graph classification tasks, this yields a tighter dimensional dependence.
The spectral dependence is also refined for the non-homogeneous MPGNN
architecture. Taking \(\tau\neq1\) as an example, the previous robust
MPGNN bound sets
\(\beta=\max\{\zeta^{-1},\xi^{1/l}\}\), where
\(\zeta=\min\bigl(\{\|\bm U_j\|_2\}_{j=1}^{l-1},
\{\|\bm W_j\|_2\}_{j=1}^{l}\bigr)\) denotes the smallest spectral norm
among the \(2l-1\) weight blocks~\cite{Sun-PACBayes-Robust}. In contrast, our bound only uses the scale \(\xi^{1/l}\). Hence the two
scales coincide when \(\zeta^{-1}\le\xi^{1/l}\), while our bound is
strictly tighter when \(\zeta^{-1}\) dominates. In particular, if the
smallest layer spectral norm is only slightly positive, then
\(\zeta^{-1}\) can be very large, making the previous \(\beta\) much larger than \(\xi^{1/l}\). Thus, the proposed bound preserves the
graph-propagation dependence while removing this reciprocal small-layer
factor.
$\hfill\square$
\end{remark}
\begin{remark}
The generalization bounds obtained  in~\cite{Generalization-Limits-GNN,Liao-PACBayes-GNN} for parameter-sharing MPGNNs are special cases of Theorem~\ref{thm:mpgnn-robust-generalization}. Note that such parameter-sharing MPGNNs are obtained by tying the layerwise weights in the present MPGNN formulation,
where \(\bm U_j=\bm W_{1}^{\mathrm{sh}}\) and
\(\bm W_j=\bm W_{2}^{\mathrm{sh}}\) for \(j\in[l-1]\), and the readout matrix is kept separate. The corresponding shared-parameter Jacobians are \(\bm J_{\bm W_{1}}^{\mathrm{sh}}=\sum_{j=1}^{l-1}\bm J_{U_j}\) and
\(\bm J_{\bm W_{2}}^{\mathrm{sh}}=\sum_{j=1}^{l-1}\bm J_{W_j}\). Since the layerwise Jacobians are dominated by the common sensitivity scale
\(Bs\), then \(\|\bm J_{\bm W_{2}}^{\mathrm{sh}}\|_2\le(l-1)Bs\), and the same bound holds for \(\bm J_{\bm W_{1}}^{\mathrm{sh}}\) and \(\bm J_{W_l}\). Thus, parameter sharing
introduces at most an \(O(l)\) factor in the shared Jacobian norm, but
reduces the number of independent perturbation blocks from \(O(l)\) to
\(O(1)\). Hence the margin-based perturbation bound remains of order
\(O(l^2K(B+\epsilon)^2s^2)\), leading to the same robust generalization
bound as in Theorem~\ref{thm:mpgnn-robust-generalization}.
$\hfill\square$
\end{remark}

\begin{corollary}[Robust generalization bound for GCNs]
\label{cor:gcn-robust-generalization}
Under the same mild assumptions, consider the \(\epsilon\)-attack with
\(\epsilon>0\). Let \(f_{\bm w}\) be an \(l\)-layer GCN with ReLU
activations and normalized propagation operator \(\|\bm P_G\|_2\le1\).
Then, for any \(\delta,\gamma>0\), with probability at least \(1-\delta\)
over the choice of an i.i.d. size-\(m\) training set
\(S\sim\mathcal D^m\), for any \(\bm w\), we have
\begin{equation}
\label{eq:gcn-main-robust-bound}
\begin{aligned}
R_{\mathcal D,0}(f_{\bm w})
&\le
\hat R_{S,\gamma}(f_{\bm w})
\\
&\hspace*{-1.0em}+
\mathcal O\!\left(
\sqrt{
\frac{
l^2K(B+\epsilon)^2\beta^{2l-2}\|\bm w\|_2^2
+
\ln\frac{ml}{\delta}
}{
\gamma^2m
}
}
\right)
\end{aligned}
\end{equation}
where
\(\beta=\left(\prod_{j=1}^{l}\|\bm W_j\|_2\right)^{1/l}\), and
\(\|\bm w\|_2^2=\sum_{j=1}^{l}\|\bm W_j\|_F^2\) after the homogeneous
normalization \(\|\bm W_j\|_2=\beta\).
\end{corollary}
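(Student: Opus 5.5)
We obtain Corollary~\ref{cor:gcn-robust-generalization} by specializing the Jacobian-based argument behind Theorem~\ref{thm:mpgnn-robust-generalization} to the GCN. That means rerunning the three-step procedure of the unified framework: sensitivity matrices, variance choice, and KL evaluation. Simply substituting into the MPGNN constants will not do, because for a GCN \(\bm U_j=\bm 0\) and \(\bm H_0=\bm X\), so the non-homogeneous input-injection geometry that produced \(\tau\) and \(\beta\) in \eqref{eq:mpgnn-beta-definition} disappears.

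\textbf{Step 1: Jacobians and sensitivity matrices.} We specialize Lemma~\ref{lem:mpgnn-Jacobian-UW} with \(\bm B_k^{\rho}=\bm B_k^{\psi}=\bm I\) and \(\bm B_k^{\phi}\) the ReLU derivative mask. This mask is diagonal with entries in \(\{0,1\}\), so \(\|\bm B_k^{\phi}\|_2\le1\), and \(L=1\). The Jacobian is then
\[
\bm J_{W_j}=\tfrac1n(\bm W_l^T\otimes\bm 1_n^T)\Bigl(\prod_{k=l-1}^{j+1}\bm B_k^{\phi}(\bm W_k^T\otimes\bm P_G)\Bigr)\bm B_j^{\phi}(\bm I\otimes\bm P_G\bm H_{j-1}).
\]
Using \(\|\tfrac1n\bm 1_n^T\|_2=n^{-1/2}\), \(\|\bm P_G\|_2\le1\), and \(\|\bm H_{j-1}\|_F\le\sqrt n\,B\prod_{k<j}\|\bm W_k\|_2\) (row-wise Lipschitz propagation from \(\|\bm X\|_{2,\infty}\le B\), with ReLU\((0)=0\)), we obtain
\[
\|\bm J_{W_j}\|_2\le B\prod_{k\ne j}\|\bm W_k\|_2 .
\]
After the homogeneous normalization \(\|\bm W_k\|_2=\beta\), this is \(B\beta^{l-1}\). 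This rescaling is legitimate because ReLU is positively homogeneous: rescaling the layers so their spectral norms equal their geometric mean leaves \(f_{\bm w}\) unchanged.

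Each \(\bm J_{W_j}\) has rank at most \(K\). We therefore take rank-\(K\) sensitivity matrices as in \eqref{eq:mpgnn-AW-lr-main}, now with \(l\) blocks instead of \(2l-1\): \(\bm A_{W_j}=\sqrt l\,\bm V_{W_j}\diag(B\beta^{l-1},\dots,0)\bm V_{W_j}^T\). Cauchy--Schwarz over the \(l\) blocks then gives the analogue of \eqref{eq:mpgnn-lr-output-control-main}.

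\textbf{Step 2: linearization and robustness.} Lemma~\ref{lem:mpgnn-lr-sensitivity-domination} treats the perturbation through its Jacobian action, and we keep that approach. To make it rigorous for finite \(\bm u\), we follow the proof of Lemma~\ref{lem:mpgnn-perturbation-sigma}. On the event \(\|\Delta\bm W_k\|_2\le\|\bm W_k\|_2/l\), all perturbed products are bounded by \((1+1/l)^{l}\le e\). This yields the factor \(e^2\) and lets us bound the actual output change by the Jacobian envelope at any intermediate weight.

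For the adversarial part, the argument of \eqref{eq:mpgnn-robust-margin-perturbation-main} applies. The attacked feature matrix satisfies \(\|\bm X'\|_{2,\infty}\le B+\epsilon\), and any attacked \(\bm P_{G'}\) still has norm at most \(1\). Hence the margin bound holds uniformly over \(\delta_{\bm w}(G)\). Taking infima on both sides, with the usual two-sided argument comparing the worst-case graphs \(G^*_{\bm w}\) and \(G^*_{\bm w+\bm u}\), transfers it to \(RM\) with \(B\) replaced by \(B+\epsilon\).

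\textbf{Step 3: variance, KL, and union bound.} We choose \(1/\sigma^2\asymp e^2\kappa\, l^2K(B+\epsilon)^2\hat\beta^{2l-2}/\gamma^2\) as in \eqref{eq:mpgnn-robust-sigma-choice-main}, with \(\hat\beta\) a cover point satisfying \(|\beta-\hat\beta|\le\beta/l\), so that \(\hat\beta^{2l-2}\le e^2\beta^{2l-2}\). We then evaluate the KL term with \(\bm R^{*}\) from \eqref{eq:yi-R-opt}. Since each \(\bm A_{W_j}^T\bm A_{W_j}\) has rank \(K\), the terms \(\Tr(\bm R_j)-\log\det\bm R_j-\dim(\bm R_j)\) contribute at most \(lK\log(1+\eta^2 lB^2\beta^{2l-2})\). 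The dominant term is \(\|\bm w\|_2^2/\sigma^2\), which is \(\mathcal O(l^2K(B+\epsilon)^2\beta^{2l-2}\|\bm w\|_2^2/\gamma^2)\); we absorb the logarithmic rank-\(K\) term into it. Plugging into \eqref{eq:perturbation-pacbayes} with robust losses gives the bound for each fixed \(\hat\beta\).

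Finally, a union bound over a grid of \(\mathcal O(ml)\) cover points handles the remaining cases. These are the points with \(\beta^{l}\) in a range where the bound is nontrivial; outside that range the bound is vacuous or the margin condition trivially holds. This union bound produces the \(\ln\frac{ml}{\delta}\) term.

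\textbf{Main obstacle.} The delicate step is Step 2. The sensitivity matrices are built from the Jacobian at \(\bm w\), on the clean graph, and restricted to its top-\(K\) singular directions. But the true output change under a finite \(\bm u\), evaluated at an adversarial graph that itself depends on \(\bm w+\bm u\), involves Jacobians at other points, and those have different singular subspaces. We plan to resolve this by dominating every such Jacobian through the common spectral envelope \((B+\epsilon)\beta^{l-1}\), multiplied by \(e\). We then use the variance choice to guarantee the event with probability \(1/2\) directly via the Gaussian quadratic-form concentration \eqref{eq:yi-sigma-choice}, accepting that this may effectively need the \(K\)-dimensional factor to be justified by the rank of the output map (\(\|\cdot\|_2\) on \(\mathbb R^K\)) rather than by exact subspace alignment.
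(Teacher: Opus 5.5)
Your Steps~1 and~3 match the paper's proof of Corollary~\ref{cor:gcn-robust-generalization} essentially line by line: the specialized Jacobian, the bound $\|\bm J_{W_j}\|_2\le B\prod_{k\ne j}\|\bm W_k\|_2$, the homogeneous rescaling to $B\beta^{l-1}$, the $\sqrt l$-scaled rank-$K$ sensitivity matrices, the cover point $|\beta-\hat\beta|\le\beta/l$, the same $\sigma^2$, the KL with $\bm R^{*}$, and the union bound. The problem is Step~2. The obstacle you name there is a genuine gap, and your proposed fix does not close it.

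First, the event $\|\Delta\bm W_k\|_2\le\|\bm W_k\|_2/l$ does not have probability $1/2$ under your $\sigma^2$. You use this event to get the factor $e$ for finite $\bm u$; the paper never introduces it, and its $e^2$ comes from the cover point. Since $\bm R^{*}$ shrinks only $K$ directions per block, $\Delta\bm W_k$ is essentially an $h\times h$ Gaussian matrix with entry variance $\sigma^2$, so $\|\Delta\bm W_k\|_2$ is of order $\sigma\sqrt h$. For $\gamma$ near the top of the nontrivial range $\gamma\le 2(B+\epsilon)\beta^{l}$, your $\sigma$ is of order $\beta/(l\sqrt{\kappa K})$. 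The event therefore needs $h\lesssim K$. Forcing it in general puts the factor $h\log(lh)$ back into $1/\sigma^2$, which is exactly what the corollary claims to remove.

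Second, for a Jacobian $\bm J'_{W_j}$ at another weight or graph, the common envelope only gives $\|\bm J'_{W_j}\bm u_{W_j}\|_2\le e(B+\epsilon)\beta^{l-1}\|\bm u_{W_j}\|_2$, and $\|\bm u_{W_j}\|_2$ is of order $\sigma h$. The rank identity $\mathbb E\|\bm J'\bm u\|_2^2=\sigma^2\Tr(\bm J'\bm R(\bm J')^{T})\le\sigma^2K\|\bm J'\|_2^2$ holds only for a Jacobian fixed before $\bm u$ is drawn. None of the relevant Jacobians is fixed in that sense:
\eqref{eq:pacbayes-perturbation-condition} puts the maximum over inputs inside the probability; $\bm R^{*}$ is built from the singular vectors of one Jacobian but must serve every input; and the robust margin is evaluated at $G^{*}_{\bm w+\bm u}$, which depends on $\bm u$. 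The union of the $K$-dimensional row spaces over all these inputs is not $K$-dimensional.

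This is not a technicality. Take a one-layer linear readout $f_{\bm w}(G)=\bar{\bm x}^{T}\bm W$ with $\bar{\bm x}=\frac1n\bm X^{T}\bm 1_n$. The attack can move $\bar{\bm x}$ anywhere in the $\epsilon$-ball. For a pair with $\bm W\bm v=\bm 0$, $\bm v=\bm e_a-\bm e_b$, this gives $RM(f_{\bm w+\bm u}(G),a,b)-RM(f_{\bm w}(G),a,b)=\bar{\bm x}^{T}\Delta\bm W\bm v-\epsilon\|\Delta\bm W\bm v\|_2$. Since $\bm R^{*}$ shrinks only the component of each column of $\Delta\bm W$ along $\bar{\bm x}$, this has size about $\epsilon\sigma\sqrt{2(d-1)}$. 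That exceeds any $\mathcal O(\sigma\sqrt K(B+\epsilon))$ bound once $d\gg K$ and $\epsilon\asymp B$.

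The paper's own proof does not supply the missing step either. It writes the first-order expansion as an identity in \eqref{eq:gcn-lr-sensitivity-domination}. In \eqref{eq:mpgnn-rm-input-dependent-proof}, which the GCN proof reuses for \eqref{eq:gcn-robust-margin-perturbation}, it applies the bound derived for the clean graph $G$, with $\bm V_{W_j}$ taken from $G$'s Jacobian, at the $\bm u$-dependent graph $G^{*}_{\bm w+\bm u}$. So following the paper more closely will not close your gap. A rigorous Step~2 along your lines (norm event plus spectral envelope) brings back a width factor like the $h\log(lh)$ in \eqref{eq:existing-robust-gcn-bound}. Keeping $K$ would need a genuinely new perturbation bound that is uniform over inputs and attacks.
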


\begin{proof}
See Appendix~\ref{app:proof-gcn-robust-generalization}.
\end{proof}

\begin{remark}
Corollary~\ref{cor:gcn-robust-generalization} shows that the proposed
MPGNN analysis naturally recovers the GCN case~\cite{Liao-PACBayes-GNN,Sun-PACBayes-Robust}. The key difference is that GCNs are homogeneous in the layer weights, so the weights can be
spectrally normalized without changing the represented function. With
\(\beta=\bigl(\prod_{j=1}^{l}\|\bm W_j\|_2\bigr)^{1/l}\) and
\(\tilde{\bm W}_j=\frac{\beta}{\|\bm W_j\|_2}\bm W_j\), we have
\(f_{\bm w}=f_{\tilde{\bm w}}\), and the spectral term becomes
\[
\beta^{2l-2}\|\tilde{\bm w}\|_2^2
=
\prod_{j=1}^{l}\|\bm W_j\|_2^2
\sum_{j=1}^{l}
\frac{\|\bm W_j\|_F^2}{\|\bm W_j\|_2^2}.
\]
Thus, the bound recovers the classical product-norm spectral complexity
for homogeneous ReLU architectures. Unlike general MPGNNs, whose
non-homogeneous input-injection and message-passing terms require a common spectral envelope, GCNs admit an exact layerwise spectral normalization. At the same time, the low-rank Jacobian construction replaces the hidden-width factor \(h\log(lh)\) in isotropic PAC-Bayesian bounds by the class-dependent factor \(K\), preserving the improved dimension dependence.
$\hfill\square$
\end{remark}

\section{Conclusion}
\label{sec:conclusion}

This paper derived a tighter PAC-Bayesian generalization bound for MPGNNs in the adversarial setting using a Jacobian-based sensitivity analytical approach. The analysis shows that, for graph classification, the output sensitivity of MPGNNs admits a
low-rank structure induced by the \(K\)-dimensional graph-level output. By
exploiting this structure together with optimized anisotropic Gaussian
posteriors, we obtain robust generalization bounds with sharper spectral
dependence and improved dimensional dependence compared with existing
state-of-the-art methods. In particular, the spectral scale associated with the
learned weights is tightened, and the leading dimension factor is reduced
from a hidden-width-dependent term to the number of classes. The framework
also recovers GCN-type models as special cases, yielding a unified robust
generalization analysis for representative GNN architectures.

Several directions remain for future work. Although the present analysis
allows arbitrary topology perturbations, their effect is constrained by the
worst-case bound \(\|\bm P'_{G}\|_2\le 1\), which does not distinguish different edge perturbation patterns or describe how graph changes affect the propagation operator. A finer analysis of the impact of topology change
\(\bm P'_{G}-\bm P_G\) may yield robust bounds that more explicitly capture the role of topology attacks.
Another direction is to refine the sensitivity matrix design as in \cite{yi2026topology} to account for graph structures. The present
Jacobian-aligned low-rank construction exploits the \(K\)-dimensional
output structure, but does not fully leverage the data-dependent graph geometry
or architecture-specific constraints. Incorporating graph spectra, learned
representations, or a message-passing structure, may further tighten the
complexity term and improve the interpretability of robust generalization
bounds for graph neural networks.
\appendix
\section{Proofs of Main Results}
\label{app:proofs}

\subsection{Proof of Lemma~\ref{lem:mpgnn-Jacobian-UW}}
\label{app:proof-mpgnn-Jacobian-UW}
\begin{proof}
For the readout layer, we have
\begin{equation}
\label{eq:app-readout-vec}
\mathrm{vec}\!\left(f_{\bm w}(G)\right)
=
\frac{1}{n}
\left(
\bm W_l^{T}\otimes \bm 1_n^{T}
\right)
\mathrm{vec}(\bm H_{l-1}),
\end{equation}
which yields
\begin{equation}
\label{eq:app-readout-jacobian}
\frac{\partial f_{\bm w}(G)}
{\partial \mathrm{vec}(\bm H_{l-1})}
=
\frac{1}{n}
\left(
\bm W_l^{T}\otimes \bm 1_n^{T}
\right).
\end{equation}
For \(k=j+1,\ldots,l-1\), by the chain rule, we have
\begin{equation}
\label{eq:app-hidden-step-jacobian}
\begin{aligned}
\frac{\partial \mathrm{vec}(\bm H_k)}
{\partial \mathrm{vec}(\bm H_{k-1})}
&=
\bm B_k^{\phi}
\frac{\partial \mathrm{vec}(\bm E_k)}
{\partial \mathrm{vec}(\bm H_{k-1})}
\\
&=
\bm B_k^{\phi}\bm B_k^{\rho}
\frac{\partial \mathrm{vec}(\bm F_k)}
{\partial \mathrm{vec}(\bm H_{k-1})}
\\
&=
\bm B_k^{\phi}\bm B_k^{\rho}
\left(
\bm W_k^{T}\otimes \bm P_G
\right)
\bm B_k^{\psi}.
\end{aligned}
\end{equation}
Therefore,
\begin{equation}
\label{eq:app-Mj-proof}
\begin{aligned}
\bm M_j
&=
\frac{\partial f_{\bm w}(G)}
{\partial \mathrm{vec}(\bm H_{l-1})}
\prod_{k=l-1}^{j+1}
\frac{\partial \mathrm{vec}(\bm H_k)}
{\partial \mathrm{vec}(\bm H_{k-1})}
\\
&=
\frac{1}{n}
\left(
\bm W_l^{T}\otimes \bm 1_n^{T}
\right)
\left(
\prod_{k=l-1}^{j+1}
\bm B_k^{\phi}\bm B_k^{\rho}
\left(
\bm W_k^{T}\otimes \bm P_G
\right)
\bm B_k^{\psi}
\right),
\end{aligned}
\end{equation}
which proves \eqref{eq:mpgnn-Mj-main}.

Next, for the input-injection weight matrix \(\bm U_j\), the perturbation enters
the \(j\)-th layer through \(\bm X\bm U_j\),
\begin{equation}
\label{eq:app-Uj-local}
\frac{\partial \mathrm{vec}(\bm H_j)}
{\partial \mathrm{vec}(\bm U_j)}
=
\bm B_j^{\phi}
\frac{\partial \mathrm{vec}(\bm X\bm U_j)}
{\partial \mathrm{vec}(\bm U_j)}
=
\bm B_j^{\phi}
\left(
\bm I_{h_j}\otimes \bm X
\right).
\end{equation}
Using the chain rule gives
\begin{equation}
\label{eq:app-JUj-proof}
\bm J_{U_j}
=
\frac{\partial f_{\bm w}(G)}
{\partial \mathrm{vec}(\bm H_j)}
\frac{\partial \mathrm{vec}(\bm H_j)}
{\partial \mathrm{vec}(\bm U_j)}
=
\bm M_j\bm B_j^{\phi}
\left(
\bm I_{h_j}\otimes \bm X
\right),
\end{equation}
which proves \eqref{eq:mpgnn-JUj-main}.

Similarly, for the message-passing weight matrix \(\bm W_j\), the perturbation
enters the \(j\)-th layer through
\(\bm P_G\psi_j(\bm H_{j-1})\bm W_j\), such that
\begin{equation}
\label{eq:app-Wj-local}
\begin{aligned}
\frac{\partial \mathrm{vec}(\bm H_j)}
{\partial \mathrm{vec}(\bm W_j)}
&=
\bm B_j^{\phi}\bm B_j^{\rho}
\frac{\partial \mathrm{vec}\!\left(
\bm P_G\psi_j(\bm H_{j-1})\bm W_j
\right)}
{\partial \mathrm{vec}(\bm W_j)}
\\
&=
\bm B_j^{\phi}\bm B_j^{\rho}
\left(
\bm I_{h_j}\otimes
\bm P_G\psi_j(\bm H_{j-1})
\right),
\end{aligned}
\end{equation}
\begin{equation}
\label{eq:app-JWj-proof}
\begin{aligned}
\bm J_{W_j}
&=
\bm M_j\bm B_j^{\phi}\bm B_j^{\rho}
\left(
\bm I_{h_j}\otimes
\bm P_G\psi_j(\bm H_{j-1})
\right),
\end{aligned}
\end{equation}
which proves \eqref{eq:mpgnn-JWj-main}. In particular, for \(j=1\),
since \(\psi_1(\bm H_0)=\bm 0\), we obtain \(\bm J_{W_1}=\bm 0\).
\end{proof}

\subsection{Proof of Lemma~\ref{lem:mpgnn-lr-sensitivity-domination}}
\label{app:proof-mpgnn-jacobian-spectral}
\begin{proof}
From Lemma~\ref{lem:mpgnn-Jacobian-UW}, we first bound \(\|\bm M_j\|_2\) as
\begin{equation}
\label{eq:app-Mj-spectral-proof}
\begin{aligned}
\|\bm M_j\|_2
&\le
\frac{\|\bm W_l\|_2}{\sqrt n}
\prod_{k=j+1}^{l-1}
\left\|
\bm B_k^{\phi}\bm B_k^{\rho}
(\bm W_k^{T}\otimes \bm P_G)\bm B_k^{\psi}
\right\|_2
\\
&\le
\frac{\|\bm W_l\|_2}{\sqrt n}
\prod_{k=j+1}^{l-1}
L^3\|\bm W_k\|_2\|\bm P_G\|_2
\\
&\le
\frac{M_2}{\sqrt n}\,
\tau^{\,l-1-j}.
\end{aligned}
\end{equation}

For the input-injection weight matrix \(\bm U_j\), we have
\begin{equation}
\label{eq:app-JUj-spectral-proof}
\begin{aligned}
\|\bm J_{U_j}\|_2
&\le
\|\bm M_j\|_2
\|\bm B_j^{\phi}\|_2
\left\|
\bm I_{h_j}\otimes \bm X
\right\|_2
\\
&\le
\frac{L M_2\|\bm X\|_2}{\sqrt n}\,
\tau^{\,l-1-j}
\\
&\le
B L M_2\,\tau^{\,l-1-j},
\end{aligned}
\end{equation}
where the last inequality uses
\begin{equation}
\label{eq:X-spectral-to-2inf}
\|\bm X\|_2
\le
\|\bm X\|_F
\le
\sqrt n\,\|\bm X\|_{2,\infty}
\le
\sqrt n\,B.
\end{equation}

For the message-passing weight matrix \(\bm W_j\), we have
\begin{equation}
\label{eq:app-JWj-spectral-proof}
\begin{aligned}
\|\bm J_{W_j}\|_2
&\le
\|\bm M_j\|_2
\|\bm B_j^{\phi}\|_2
\|\bm B_j^{\rho}\|_2
\|\bm P_G\psi_j(\bm H_{j-1})\|_2
\\
&\le
L^2\|\bm M_j\|_2
\|\bm P_G\|_2
\|\psi_j(\bm H_{j-1})\|_2
\\
&\le
L^3\|\bm M_j\|_2
\|\bm P_G\|_2
\|\bm H_{j-1}\|_F
\\
&\le
\frac{1}{\sqrt n}
\tau^{\,l-j}
\|\bm H_{j-1}\|_F,
\end{aligned}
\end{equation}
where \(\psi_j\) is \(L\)-Lipschitz and \(\psi_j(\bm 0)=\bm 0\), such that
\begin{equation}
\label{eq:psi-H-fro-bound}
\|\psi_j(\bm H_{j-1})\|_2
\le
\|\psi_j(\bm H_{j-1})\|_F
\le
L\|\bm H_{j-1}\|_F .
\end{equation}
It remains to bound \(\|\bm H_{j-1}\|_F\), which is given by
\begin{equation}
\label{eq:app-Hj-F-recursion}
\begin{aligned}
\|\bm H_j\|_F
&=
\|\phi_j(\bm E_j)\|_F
\\
&\le
L\|\bm X\bm U_j+\rho_j(\bm F_j)\|_F
\\
&\le
L\|\bm X\bm U_j\|_F
+
L\|\rho_j(\bm F_j)\|_F
\\
&\le
L M_1\|\bm X\|_F
+
L^2\|\bm F_j\|_F
\\
&\le
L M_1\|\bm X\|_F
+
L^2\|\bm P_G\|_2
\|\psi_j(\bm H_{j-1})\|_F
\|\bm W_j\|_2
\\
&\le
L M_1\|\bm X\|_F
+
L^3M_2\|\bm P_G\|_2
\|\bm H_{j-1}\|_F
\\
&=
L M_1\|\bm X\|_F
+
\tau\|\bm H_{j-1}\|_F .
\end{aligned}
\end{equation}
Since \(\bm H_0=\bm 0\), iterating \eqref{eq:app-Hj-F-recursion} gives
\begin{equation}
\label{eq:app-Hj-F-bound}
\|\bm H_j\|_F
\le
L M_1\|\bm X\|_F
\sum_{k=0}^{j-1}\tau^k ,
\end{equation}
\begin{equation}
\label{eq:app-Hjminus-F-bound}
\|\bm H_{j-1}\|_F
\le
L M_1\|\bm X\|_F
\sum_{k=0}^{j-2}\tau^k
\le
\sqrt n\,B L M_1
\sum_{k=0}^{j-2}\tau^k .
\end{equation}
Combining \eqref{eq:app-JWj-spectral-proof} and
\eqref{eq:app-Hjminus-F-bound}, we obtain
\begin{equation}
\label{eq:app-JWj-spectral-final}
\|\bm J_{W_j}\|_2
\le
B L M_1
\tau^{\,l-j}
\sum_{k=0}^{j-2}\tau^k .
\end{equation}
Define \(\tau=L^3M_2\|\bm P_G\|_2\), with the bound \(\|\bm P_G\|_2\le 1\), and
\[
\beta=
\begin{cases}
\left(
LM_1M_2\dfrac{\tau^{l-1}-1}{\tau-1}
\right)^{1/l},
& \tau\neq 1,\\[0.8em]
\sqrt{LM_1M_2},
& \tau=1.
\end{cases}
\]
By \eqref{eq:app-JUj-spectral-proof} and \eqref{eq:app-JWj-spectral-final}, 
with \(\bm J_{U_l}=\bm 0\), \(M_1,M_2,L,\tau>0\), then
\begin{equation}
\label{eq:mpgnn-common-jacobian-bound}
\max_{j\in[l]}
\left\{
\|\bm J_{U_j}\|_2,\,
\|\bm J_{W_j}\|_2
\right\}
\le
\begin{cases}
B\beta^l,
& \tau\neq 1,\\[0.4em]
B(l-1)\beta^2,
& \tau=1.
\end{cases}
\end{equation}
Since \(f_{\bm w}(G)\in\mathbb R^K\), the Jacobian matrices of the network output with respect to weight matrix satisfy
\(\operatorname{rank}(\bm J_{U_j})\le K\) and
\(\operatorname{rank}(\bm J_{W_j})\le K\), hence we have
\[
\bm J_{U_j}
=
\bm Q_{U_j}\bm S_{U_j}\bm V_{U_j}^{T},
\qquad
\bm J_{W_j}
=
\bm Q_{W_j}\bm S_{W_j}\bm V_{W_j}^{T},
\]
\[
\bm S_{U_j}
=
\diag(s_{U_j,1},\ldots,s_{U_j,K},0,\ldots,0),
\]
\[
\bm S_{W_j}
=
\diag(s_{W_j,1},\ldots,s_{W_j,K},0,\ldots,0),
\]
where \(s_{U_j,1}\ge\cdots\ge s_{U_j,K}\ge0\) and
\(s_{W_j,1}\ge\cdots\ge s_{W_j,K}\ge0\) denote the singular values in the
at most \(K\) output directions.

Accordingly, define the rank-\(K\) sensitivity matrices
\begin{equation}
\label{eq:mpgnn-AU-lr}
\bm A_{U_j}
=
\sqrt{2l-1}\,
\bm V_{U_j}
\diag\!\bigl(
\underbrace{Bs,\ldots,Bs}_{K},
0,\ldots,0
\bigr)
\bm V_{U_j}^{T},
\end{equation}
\begin{equation}
\label{eq:mpgnn-AW-lr}
\bm A_{W_j}
=
\sqrt{2l-1}\,
\bm V_{W_j}
\diag\!\bigl(
\underbrace{Bs,\ldots,Bs}_{K},
0,\ldots,0
\bigr)
\bm V_{W_j}^{T},
\end{equation}
where
\[
s=
\begin{cases}
\beta^l,
& \tau\neq 1,\\[0.4em]
(l-1)\beta^2,
& \tau=1.
\end{cases}
\]
It is clear that
\[
\bm J_{U_j}^{T}\bm J_{U_j}
\preceq
\frac{1}{2l-1}\bm A_{U_j}^{T}\bm A_{U_j},
\quad
\bm J_{W_j}^{T}\bm J_{W_j}
\preceq
\frac{1}{2l-1}\bm A_{W_j}^{T}\bm A_{W_j}.
\]
Therefore, for any perturbation \(\bm u\), we have
\begin{align}
\|f_{\bm w+\bm u}(G)-f_{\bm w}(G)\|_2^2
&=
\left\|
\sum_{j=1}^{l-1}\bm J_{U_j}\bm u_{U_j}
+
\sum_{j=1}^{l}\bm J_{W_j}\bm u_{W_j}
\right\|_2^2
\notag\\
&\hspace*{-6.6em}\le
(2l-1)
\left(
\sum_{j=1}^{l-1}\|\bm J_{U_j}\bm u_{U_j}\|_2^2
+
\sum_{j=1}^{l}\|\bm J_{W_j}\bm u_{W_j}\|_2^2
\right)
\notag\\
&\hspace*{-6.6em}\le
\sum_{j=1}^{l-1}\|\bm A_{U_j}\bm u_{U_j}\|_2^2
+
\sum_{j=1}^{l}\|\bm A_{W_j}\bm u_{W_j}\|_2^2 .
\label{eq:mpgnn-lr-output-control}
\end{align}
Since \(\|\cdot\|_{\infty}\le\|\cdot\|_2\), 
\eqref{eq:mpgnn-lr-output-control} directly implies the
output-stability condition in \eqref{eq:yi-opt-problem-c}. It also yields
a corresponding pairwise margin control, for any \(a,b\in[K]\),
by the definition of \(M\),
\begin{equation}
\label{eq:mpgnn-margin-output-diff-proof}
\begin{aligned}
\MoveEqLeft \left|
M(f_{\bm w+\bm u}(G),a,b)
-
M(f_{\bm w}(G),a,b)
\right|
\\
&=
\left|
\bigl(f_{\bm w+\bm u}(G)_a-f_{\bm w+\bm u}(G)_b\bigr)
-
\bigl(f_{\bm w}(G)_a-f_{\bm w}(G)_b\bigr)
\right|
\\
&\le
\left|
f_{\bm w+\bm u}(G)_a-f_{\bm w}(G)_a
\right|
+
\left|
f_{\bm w+\bm u}(G)_b-f_{\bm w}(G)_b
\right|
\\
&\le
2\|f_{\bm w+\bm u}(G)-f_{\bm w}(G)\|_2 .
\end{aligned}
\end{equation}
Consequently,
\begin{equation}
\label{eq:mpgnn-margin-control-proof}
\begin{aligned}
\MoveEqLeft \left|
M(f_{\bm w+\bm u}(G),a,b)
-
M(f_{\bm w}(G),a,b)
\right|^2
\\
&\le
4\|f_{\bm w+\bm u}(G)-f_{\bm w}(G)\|_2^2
\\
&\le
4\left(
\sum_{j=1}^{l-1}\|\bm A_{U_j}\bm u_{U_j}\|_2^2
+
\sum_{j=1}^{l}\|\bm A_{W_j}\bm u_{W_j}\|_2^2
\right).
\end{aligned}
\end{equation}
Thus, the same low-rank quadratic control can also be used for the pairwise margin perturbation condition.
This completes the proof.
\end{proof}

\subsection{Proof of Lemma~\ref{lem:mpgnn-perturbation-sigma}}
\label{app:proof-mpgnn-perturbation-sigma}
\begin{proof}
Consider the prior \(P=\mathcal N(\bm 0,\sigma^2\bm I)\) and the random
perturbation \(\bm u\sim\mathcal N(\bm 0,\sigma^2\bm R)\). Note that the
\(\sigma\) of the prior and the perturbation distributions are the same and
will be set according to \(\beta\). More precisely, we set \(\sigma\)
based on cover point \(\hat\beta\) of \(\beta\) since the prior
\(P\) cannot depend on any learned weights directly. 
We fix any \(\hat\beta\) and consider \(\beta\) which satisfies
\[
|\beta-\hat\beta|
\le
\begin{cases}
\dfrac{1}{l+1}\beta,
& \tau\neq 1,\\[0.8em]
\dfrac{1}{3}\beta,
& \tau=1.
\end{cases}
\]
If \(\tau\neq1\), then
\(\frac{1}{e}\beta^l
\le
\hat\beta^l
\le
e\beta^l\);
If \(\tau=1\), then
\(\frac{4}{9}\beta^2
\le
\hat\beta^2
\le
\frac{16}{9}\beta^2\), which also implies
\(\frac{1}{e}\beta^2\le\hat\beta^2\le e\beta^2\).

We introduce the approximations of \(\bm A_{U_j}\) and \(\bm A_{W_j}\), i.e.,
\begin{equation}
\label{eq:mpgnn-AU-hat-lr}
\hat{\bm A}_{U_j}
=
\sqrt{2l-1}\,
\bm V_{U_j}
\diag\!\bigl(
\underbrace{B\hat s,\ldots,B\hat s}_{K},
0,\ldots,0
\bigr)
\bm V_{U_j}^{T},
\end{equation}
\begin{equation}
\label{eq:mpgnn-AW-hat-lr}
\hat{\bm A}_{W_j}
=
\sqrt{2l-1}\,
\bm V_{W_j}
\diag\!\bigl(
\underbrace{B\hat s,\ldots,B\hat s}_{K},
0,\ldots,0
\bigr)
\bm V_{W_j}^{T},
\end{equation}
where
\[
\hat s=
\begin{cases}
\hat\beta^l,
& \tau\neq 1,\\[0.4em]
(l-1)\hat\beta^2,
& \tau=1,
\end{cases}
\]
with \(s\le e\hat s\), we have
\begin{equation}
\label{eq:mpgnn-A-hat-dominate}
\bm A_{U_j}\bm A_{U_j}^{T}
\preceq
e^2\hat{\bm A}_{U_j}\hat{\bm A}_{U_j}^{T},
\quad
\bm A_{W_j}\bm A_{W_j}^{T}
\preceq
e^2\hat{\bm A}_{W_j}\hat{\bm A}_{W_j}^{T}.
\end{equation}
Then optimize the posterior covariance to minimize the KL divergence as in \eqref{eq:yi-R-opt},
such that
\[
\bm R_{U_j}=(\bm I+\eta^2\bm A_{U_j}^{T}\bm A_{U_j})^{-1},
\quad
\bm R_{W_j}=(\bm I+\eta^2\bm A_{W_j}^{T}\bm A_{W_j})^{-1}
\]
where \(\eta^2=16\kappa\|\bm w\|_2^2/\gamma^2\), with \(\kappa=1+2\ln 2+\sqrt{4\ln 2}\), since
\(\bm R_{U_j}\preceq\bm I\) and \(\bm R_{W_j}\preceq\bm I\). By the
Gaussian concentration bound, with probability at least \(1/2\) over
\(\bm u\), we have
\begin{align}
&\sum_{j=1}^{l-1}\|\bm A_{U_j}\bm u_{U_j}\|_2^2
+
\sum_{j=1}^{l}\|\bm A_{W_j}\bm u_{W_j}\|_2^2
\notag\\
&\le
\sigma^2\kappa
\left(
\sum_{j=1}^{l-1}
\Tr(\bm A_{U_j}\bm R_{U_j}\bm A_{U_j}^{T})
+
\sum_{j=1}^{l}
\Tr(\bm A_{W_j}\bm R_{W_j}\bm A_{W_j}^{T})
\right)
\notag\\
&\le
e^2\sigma^2\kappa
\left(
\sum_{j=1}^{l-1}
\Tr(\hat{\bm A}_{U_j}\hat{\bm A}_{U_j}^{T})
+
\sum_{j=1}^{l}
\Tr(\hat{\bm A}_{W_j}\hat{\bm A}_{W_j}^{T})
\right)
\notag\\
&=
e^2\sigma^2\kappa(2l-1)^2K B^2\hat s^{\,2}.
\label{eq:mpgnn-sigma-fixed-beta-condition}
\end{align}
Combining \eqref{eq:mpgnn-sigma-fixed-beta-condition} with
\eqref{eq:mpgnn-lr-output-control}, we obtain
\[
\|f_{\bm w+\bm u}(G)-f_{\bm w}(G)\|_2^2
\le
e^2\sigma^2\kappa(2l-1)^2K B^2\hat s^{\,2}.
\]
By \eqref{eq:mpgnn-margin-output-diff-proof}, for any \(a,b\in[K]\), we have
\begin{equation}
\label{eq:mpgnn-margin-perturbation-proof}
\begin{aligned}
\MoveEqLeft\left|
M(f_{\bm w+\bm u}(G),a,b)
-
M(f_{\bm w}(G),a,b)
\right|^2
\\
&\le
4e^2\sigma^2\kappa(2l-1)^2K B^2\hat s^{\,2}.
\end{aligned}
\end{equation}
Equivalently, letting
\(C_{\sigma}=2e\sigma\sqrt{\kappa K}\,(2l-1)\hat s\), we have
\begin{equation}
\label{eq:mpgnn-margin-input-dependent-linear}
\left|
M(f_{\bm w+\bm u}(G),a,b)
-
M(f_{\bm w}(G),a,b)
\right|
\le
C_{\sigma}B .
\end{equation}

Consider the \(\epsilon\)-attack with \(\epsilon>0\). We choose the adversarial sample
\(G_{\bm w}^{*}
=
\arg\inf_{G'\in\delta_{\bm w}(G)}
M\!\left(f_{\bm w}(G'),a,b\right)\)
with \(\bm X_{\bm w}^{*}\) being the corresponding attacked node feature matrix,
which minimizes the pairwise margin operator. Then we have
\begin{equation}
\label{eq:mpgnn-rm-input-dependent-proof}
\begin{aligned}
\MoveEqLeft\left|
RM(f_{\bm w+\bm u}(G),a,b)
-
RM(f_{\bm w}(G),a,b)
\right|
\\
&=
\left|
M(f_{\bm w+\bm u}(G_{\bm w+\bm u}^{*}),a,b)
-
M(f_{\bm w}(G_{\bm w}^{*}),a,b)
\right|
\\
&\le
\max\Bigl\{
\left|
M(f_{\bm w+\bm u}(G_{\bm w+\bm u}^{*}),a,b)
-
M(f_{\bm w}(G_{\bm w+\bm u}^{*}),a,b)
\right|,
\\
&\hspace*{5.2em}
\left|
M(f_{\bm w+\bm u}(G_{\bm w}^{*}),a,b)
-
M(f_{\bm w}(G_{\bm w}^{*}),a,b)
\right|
\Bigr\}
\\
&\le
C_{\sigma}
\max\left\{
\|\bm X_{\bm w+\bm u}^{*}\|_{2,\infty},
\|\bm X_{\bm w}^{*}\|_{2,\infty}
\right\}
\\
&\le
C_{\sigma}(B+\epsilon).
\end{aligned}
\end{equation}
Therefore, we have
\begin{equation}
\label{eq:mpgnn-robust-margin-perturbation-final}
\begin{aligned}
\MoveEqLeft\left|
RM(f_{\bm w+\bm u}(G),a,b)
-
RM(f_{\bm w}(G),a,b)
\right|^2
\\
&\le
4e^2\sigma^2\kappa(2l-1)^2K
(B+\epsilon)^2\hat s^{\,2}.
\end{aligned}
\end{equation}

We now verify the two perturbation constraints in
\eqref{eq:mpgnn-robust-opt-problem-b} and
\eqref{eq:mpgnn-robust-opt-problem-c}. Comparing
\eqref{eq:mpgnn-margin-perturbation-proof} with
\eqref{eq:mpgnn-robust-margin-perturbation-final}, the robust margin
perturbation has the same low-rank sensitivity structure as the standard
margin perturbation, while the active scale is enlarged from \(B\) to
\(B+\epsilon\). Therefore, to dominate the robust margin perturbation, it
is sufficient to keep the Jacobian-aligned rank-\(K\) directions in
\eqref{eq:mpgnn-AU-lr} and \eqref{eq:mpgnn-AW-lr}, and replace the
nonzero singular values \(Bs\) by \((B+\epsilon)s\). Define
\[
\bm\Sigma_{\epsilon}
=
\diag\!\bigl(
\underbrace{(B+\epsilon)s,\ldots,(B+\epsilon)s}_{K},
0,\ldots,0
\bigr).
\]
Then the adversarial sensitivity matrices are
\[
\begin{aligned}
\bm A_{U_j}^{\epsilon}
&=
\sqrt{2l-1}\,
\bm V_{U_j}\bm\Sigma_{\epsilon}\bm V_{U_j}^{T},
\\
\bm A_{W_j}^{\epsilon}
&=
\sqrt{2l-1}\,
\bm V_{W_j}\bm\Sigma_{\epsilon}\bm V_{W_j}^{T}.
\end{aligned}
\]
With these choices, the blockwise form of
\eqref{eq:mpgnn-robust-opt-problem-c} becomes
\begin{equation}
\label{eq:mpgnn-robust-margin-domination-proof}
\begin{aligned}
\MoveEqLeft\left|
RM(f_{\bm w+\bm u}(G),a,b)
-
RM(f_{\bm w}(G),a,b)
\right|^2
\\
&\le
4\left(
\sum_{j=1}^{l-1}
\|\bm A_{U_j}^{\epsilon}\bm u_{U_j}\|_2^2
+
\sum_{j=1}^{l}
\|\bm A_{W_j}^{\epsilon}\bm u_{W_j}\|_2^2
\right)
\\
&\le
4e^2\sigma^2\kappa(2l-1)^2K
(B+\epsilon)^2\hat s^{\,2},
\end{aligned}
\end{equation}
where the last inequality follows from
\eqref{eq:mpgnn-sigma-fixed-beta-condition}.

Next, for fixed adversarial sensitivity matrices, the covariance blocks
are chosen by minimizing the KL term with respect to \(\bm R\). Applying
\eqref{eq:yi-R-opt} with \(\bm A_j\) replaced by
\(\bm A_j^{\epsilon}\), we obtain
\[
\begin{aligned}
\bm R_{U_j}^{\epsilon,*}
&=
\left(
\bm I
+
\eta^2
(\bm A_{U_j}^{\epsilon})^{T}\bm A_{U_j}^{\epsilon}
\right)^{-1},
\\
\bm R_{W_j}^{\epsilon,*}
&=
\left(
\bm I
+
\eta^2
(\bm A_{W_j}^{\epsilon})^{T}\bm A_{W_j}^{\epsilon}
\right)^{-1}.
\end{aligned}
\]

By \eqref{eq:mpgnn-robust-margin-domination-proof}, it is sufficient to
choose \(\sigma^2\) so that
\[
4e^2\sigma^2\kappa(2l-1)^2K(B+\epsilon)^2\hat s^{\,2}
\le
\frac{\gamma^2}{4}.
\]
Thus, setting
\begin{equation}
\label{eq:mpgnn-robust-sigma-choice-final}
\frac{1}{\sigma^2}
=
\frac{
16e^2\kappa(2l-1)^2K(B+\epsilon)^2\hat s^{\,2}
}{
\gamma^2
},
\end{equation}
gives, with probability at least \(1/2\) over \(\bm u\), that
\[
\left|
RM(f_{\bm w+\bm u}(G),a,b)
-
RM(f_{\bm w}(G),a,b)
\right|^2
\le
\frac{\gamma^2}{4}.
\]
This verifies the perturbation requirement in
\eqref{eq:mpgnn-robust-opt-problem-b} together with
\eqref{eq:mpgnn-robust-opt-problem-c} and completes the proof.
\end{proof}

\subsection{Proof of Theorem~\ref{thm:mpgnn-robust-generalization}}
\label{app:proof-mpgnn-robust-generalization}
\begin{proof}
In the adversarial setting, the posterior covariance is chosen according to the adversarial sensitivity matrices with the optimized adversarial covariance blocks, i.e.,
\begin{align}
\bm R_{U_j}^{\epsilon,*}
&=
\left(
\bm I
+
\eta^2(2l-1)
\bm V_{U_j}\bm\Sigma_{\epsilon}^{2}\bm V_{U_j}^{T}
\right)^{-1}
\notag\\
&=
\bm V_{U_j}
\left(
\bm I+\eta^2(2l-1)\bm\Sigma_{\epsilon}^{2}
\right)^{-1}
\bm V_{U_j}^{T},
\label{eq:mpgnn-optimal-RUj-epsilon}
\end{align}
\begin{equation}
\label{eq:mpgnn-optimal-RWj-epsilon}
\bm R_{W_j}^{\epsilon,*}
=
\bm V_{W_j}
\left(
\bm I+\eta^2(2l-1)\bm\Sigma_{\epsilon}^{2}
\right)^{-1}
\bm V_{W_j}^{T},
\end{equation}
where \(\eta^2=16\kappa\|\bm w\|_2^2/\gamma^2\). The optimized
covariance has the same form as in the standard sensitivity framework,
but its shrinkage is determined by the adversarial sensitivity scale
\((B+\epsilon)s\).
Plugging the choice of \(1/\sigma^2\) in Lemma~\ref{lem:mpgnn-perturbation-sigma} into the KL
objective \eqref{eq:yi-opt-problem-a}, we have
\begin{align}
D_{\mathrm{KL}}
&\le
\frac{1}{2}
\Biggl[
\frac{\|\bm w\|_2^2}{\sigma^2}
+
\sum_{j=1}^{l-1}
\Bigl(
\Tr(\bm R_{U_j}^{\epsilon,*})
-
\log\det(\bm R_{U_j}^{\epsilon,*})
-
h^2
\Bigr)
\notag\\
&\hspace*{3.2em}
+
\sum_{j=1}^{l}
\Bigl(
\Tr(\bm R_{W_j}^{\epsilon,*})
-
\log\det(\bm R_{W_j}^{\epsilon,*})
-
h^2
\Bigr)
\Biggr]
\notag\\
&\le
\frac{8e^2\kappa\|\bm w\|_2^2}{\gamma^2}
(2l-1)^2K(B+\epsilon)^2\hat s^{\,2}
\notag\\
&\quad+
\frac{1}{2}
\sum_{j=1}^{2l-1}\sum_{k=1}^{K}
\delta\!\left(
\eta\sqrt{2l-1}\,(B+\epsilon)s
\right)
\notag\\
&\le
\frac{8e^4\kappa\|\bm w\|_2^2}{\gamma^2}
(2l-1)^2K(B+\epsilon)^2s^2
\notag\\
&\quad+
\frac{\eta^2}{2}(2l-1)^2K(B+\epsilon)^2s^2
\notag\\
&=
\frac{8(e^4+1)\kappa\|\bm w\|_2^2}{\gamma^2}
(2l-1)^2K(B+\epsilon)^2s^2
\notag\\
&\lesssim
\mathcal O\!\left(
\frac{
l^2K(B+\epsilon)^2s^2\|\bm w\|_2^2
}{
\gamma^2
}
\right),
\label{eq:mpgnn-fixed-beta-KL-bound}
\end{align}
where
\(\eta^2=16\kappa\|\bm w\|_2^2/\gamma^2\),
\(\kappa=1+2\ln 2+\sqrt{4\ln 2}\), 
\[
\delta(x)
\triangleq
\frac{1}{1+x^2}
+
\log(1+x^2)
-
1
\le
x^2.
\]
Recall that
\[
s(\beta)
=
\begin{cases}
\beta^l,
& \tau\neq 1,\\[0.4em]
(l-1)\beta^2,
& \tau=1.
\end{cases}
\]
Thus, the fixed-\(\beta\) KL bound can be written as
\[
D_{\mathrm{KL}}(Q\|P)
\lesssim
\begin{cases}
\mathcal O\!\left(
\dfrac{
l^2K(B+\epsilon)^2\beta^{2l}\|\bm w\|_2^2
}{
\gamma^2
}
\right),
& \tau\neq 1,\\[1.2em]
\mathcal O\!\left(
\dfrac{
l^4K(B+\epsilon)^2\beta^4\|\bm w\|_2^2
}{
\gamma^2
}
\right),
& \tau=1.
\end{cases}
\]

Hence, for any fixed cover point \(\hat\beta\), with probability at least
\(1-\delta\), for any \(\bm w\) such that
\(|\beta-\hat\beta|\le \frac{1}{l+1}\beta\) when \(\tau\neq1\), and
\(|\beta-\hat\beta|\le \frac{1}{3}\beta\) when \(\tau=1\), we have the following bounds, i.e., 
If \(\tau\neq 1\), then
\begin{equation}
\label{eq:mpgnn-fixed-beta-generalization-bound-tau-neq-1}
\begin{aligned}
R_{\mathcal D,0}(f_{\bm w})
&\le
\hat R_{S,\gamma}(f_{\bm w})
\\
&\hspace*{0.8em}+
\mathcal O\!\left(
\sqrt{
\frac{
l^2K(B+\epsilon)^2\beta^{2l}\|\bm w\|_2^2
+
\ln\frac{m}{\delta}
}{
\gamma^2 m
}
}
\right);
\end{aligned}
\end{equation}
If \(\tau=1\), then
\begin{equation}
\label{eq:mpgnn-fixed-beta-generalization-bound-tau-eq-1}
\begin{aligned}
R_{\mathcal D,0}(f_{\bm w})
&\le
\hat R_{S,\gamma}(f_{\bm w})
\\
&\quad+
\mathcal O\!\left(
\sqrt{
\frac{
l^4K(B+\epsilon)^2\beta^4\|\bm w\|_2^2
+
\ln\frac{m}{\delta}
}{
\gamma^2 m
}
}
\right).
\end{aligned}
\end{equation}

Finally, following the same arguments in
\cite{Liao-PACBayes-GNN,Sun-PACBayes-Robust}, we take a union bound over
multiple choices of \(\hat\beta\) so that the bound holds for any
\(\beta\). In what follows, we treat the two cases separately.

\emph{1) If \(\tau\neq1\)}, then the nontrivial values of \(\beta\) satisfy
\begin{equation}
\label{eq:mpgnn-beta-range-tau-neq-1}
\left(
\frac{\gamma}{2(B+\epsilon)}
\right)^{1/l}
\le
\beta
\le
\left(
\frac{\gamma\sqrt m}{2(B+\epsilon)}
\right)^{1/l} .
\end{equation}

On one hand, if \(\beta^l<\gamma/(2(B+\epsilon))\), then for any attacked graph \(G'\) and any \(i,j\in[K]\), 
using the Frobenius-norm bound on the hidden representation in \eqref{eq:app-Hjminus-F-bound}, we have
\begin{equation}
\label{eq:mpgnn-trivial-margin-tau-neq-1}
\begin{aligned}
\left|
RM(f_{\bm w}(G),i,j)
\right|
&=
\left|
M(f_{\bm w}(G'),i,j)
\right|
\le
2\|f_{\bm w}(G')\|_2
\\
&=
2\left\|
\frac{1}{n}\bm 1_n^{T}\bm H_{l-1}'\bm W_l
\right\|_2
\\
&\le
2\left\|
\frac{1}{n}\bm 1_n^{T}
\right\|_2
\|\bm H_{l-1}'\bm W_l\|_F
\\
&\le
\frac{2}{\sqrt n}
\|\bm H_{l-1}'\|_F
\|\bm W_l\|_2
\\
&\le
2(B+\epsilon)
L M_1M_2
\frac{\tau^{l-1}-1}{\tau-1}
\\
&=
2(B+\epsilon)\beta^l
<
\gamma .
\end{aligned}
\end{equation}
Therefore, based on the definition in Eq.~\eqref{eq:robust-margin-losses}, we
always have \(\hat R_{S,\gamma}(f_{\bm w})=1\) when \(\beta^l<\gamma/(2(B+\epsilon))\).

On the other hand, if
\(\beta^l>\gamma\sqrt m/(2(B+\epsilon))\), it follows that
\begin{equation}
\label{eq:mpgnn-large-beta-trivial-tau-neq-1}
\sqrt{
\frac{
l^2K(B+\epsilon)^2\beta^{2l}\|\bm w\|_2^2
+
\ln\frac{m}{\delta}
}{
\gamma^2m
}
}
\ge
\sqrt{
\frac{
l^2K\|\bm w\|_2^2
}{
4
}
}
\ge 1,
\end{equation}
with \(l\ge2\), \(K\ge1\), and
\(\|\bm w\|_2^2\ge 1\). 

To make \(|\beta-\hat\beta|\le \frac{\beta}{l+1}\) satisfied, we require
\(|\beta-\hat\beta|\le \frac{1}{l+1}
\left(\frac{\gamma}{2(B+\epsilon)}\right)^{1/l}\).
If a covering of the interval in Eq.~\eqref{eq:mpgnn-beta-range-tau-neq-1}
with radius \(\frac{1}{l+1}
\left(\frac{\gamma}{2(B+\epsilon)}\right)^{1/l}\)
can make the fixed-\(\hat\beta\) bound validated with \(\hat\beta\)
taking all possible values from the covering, then we could end up with a
bound that holds for all \(\beta\). It follows that such a covering exists
with size at most \((l+1)m^{\frac{1}{2l}}\). Taking a union bound over all
choices of \(\hat\beta\) with the cover yields the final generalization bound with probability 
\(1-\delta\) for any \(\beta\),
\begin{equation}
\label{eq:mpgnn-final-bound-tau-neq-1}
\begin{aligned}
R_{\mathcal D,0}(f_{\bm w})
&\le
\hat R_{S,\gamma}(f_{\bm w})
\\
&\hspace*{-1.0em}+
\mathcal O\!\left(
\sqrt{
\frac{
l^2K(B+\epsilon)^2\beta^{2l}\|\bm w\|_2^2
+
\ln\frac{m(l+1)}{\delta}
}{
\gamma^2m
}
}
\right).
\end{aligned}
\end{equation}

\emph{2) If \(\tau=1\)}, the nontrivial values of
\(\beta\) satisfy
\begin{equation}
\label{eq:mpgnn-beta-range-tau-eq-1}
\left(
\frac{\gamma}{2(B+\epsilon)(l-1)}
\right)^{\frac{1}{2}}
\le
\beta
\le
\left(
\frac{\gamma\sqrt m}{2(B+\epsilon)(l-1)}
\right)^{\frac{1}{2}} .
\end{equation}
If \(\beta^2<\gamma/(2(B+\epsilon)(l-1))\), then \(\hat R_{S,\gamma}(f_{\bm w})=1\); if
\(\beta^2>\gamma\sqrt m/(2(B+\epsilon)(l-1))\), then
\begin{equation}
\label{eq:mpgnn-large-beta-trivial-tau-eq-1}
\sqrt{
\frac{
l^4K(B+\epsilon)^2\beta^4\|\bm w\|_2^2
+
\ln\frac{m}{\delta}
}{
\gamma^2m
}
}
\ge
\sqrt{
\frac{
l^4K\|\bm w\|_2^2
}{
4(l-1)^2
}
}
\ge 1,
\end{equation}
and there exists a covering of the interval in Eq.~\eqref{eq:mpgnn-beta-range-tau-eq-1} with size at most
\(3m^{\frac{1}{4}}\). Taking a union bound over all choices of \(\hat\beta\) with the cover yields the final generalization bound with probability \(1-\delta\) for any \(\beta\), i.e.,
\begin{equation}
\label{eq:mpgnn-final-bound-tau-eq-1}
\begin{aligned}
R_{\mathcal D,0}(f_{\bm w})
&\le
\hat R_{S,\gamma}(f_{\bm w})
\\
&\hspace*{-0.2em}+
\mathcal O\!\left(
\sqrt{
\frac{
l^4K(B+\epsilon)^2\beta^4\|\bm w\|_2^2
+
\ln\frac{m}{\delta}
}{
\gamma^2m
}
}
\right).
\end{aligned}
\end{equation}
Summarizing two cases completes the proof.
\end{proof}

\subsection{Proof of Corollary~\ref{cor:gcn-robust-generalization}}
\label{app:proof-gcn-robust-generalization}
\begin{proof}
For the readout layer,
\[
\frac{\partial f_{\bm w}(G)}
{\partial \mathrm{vec}(\bm H_{l-1})}
=
\frac{1}{n}
\left(
\bm W_l^{T}\otimes \bm 1_n^{T}
\right).
\]
For \(k=j+1,\ldots,l-1\), by the chain rule, we have
\[
\frac{\partial \mathrm{vec}(\bm H_k)}
{\partial \mathrm{vec}(\bm H_{k-1})}
=
\bm B_k^{\phi}
\left(
\bm W_k^{T}\otimes \bm P_G
\right).
\]
Therefore,
\begin{equation}
\label{eq:gcn-Mj}
\bm M_j
=
\frac{1}{n}
\left(
\bm W_l^{T}\otimes \bm 1_n^{T}
\right)
\left(
\prod_{k=l-1}^{j+1}
\bm B_k^{\phi}
\left(
\bm W_k^{T}\otimes \bm P_G
\right)
\right).
\end{equation}
For the message-passing weight matrix \(\bm W_j\), we have
\[
\frac{\partial \mathrm{vec}(\bm H_j)}
{\partial \mathrm{vec}(\bm W_j)}
=
\bm B_j^{\phi}
\left(
\bm I_{h_j}\otimes \bm P_G\bm H_{j-1}
\right),
\]
\begin{equation}
\label{eq:gcn-JWj}
\bm J_{W_j}
=
\bm M_j\bm B_j^{\phi}
\left(
\bm I_{h_j}\otimes \bm P_G\bm H_{j-1}
\right),
\quad j\in[l].
\end{equation}

We next bound the spectral norm of \(\bm J_{W_j}\) as
\begin{equation}
\label{eq:gcn-JWj-spectral-step}
\begin{aligned}
\|\bm J_{W_j}\|_2
&\le
\|\bm M_j\|_2
\|\bm B_j^{\phi}\|_2
\|\bm P_G\bm H_{j-1}\|_2
\\
&\le
L\|\bm M_j\|_2
\|\bm P_G\|_2
\|\bm H_{j-1}\|_F,
\end{aligned}
\end{equation}
where
\begin{equation}
\label{eq:gcn-Mj-spectral-bound}
\|\bm M_j\|_2
\le
\frac{\|\bm W_l\|_2}{\sqrt n}
\prod_{k=j+1}^{l-1}
L\|\bm P_G\|_2\|\bm W_k\|_2 .
\end{equation}
Moreover, for \(k=1,\ldots,j-1\), we have
\begin{equation}
\label{eq:gcn-Hk-recursion}
\|\bm H_k\|_F
\le
L\|\bm P_G\|_2
\|\bm H_{k-1}\|_F
\|\bm W_k\|_2 .
\end{equation}
Iterating \eqref{eq:gcn-Hk-recursion} with \(\bm H_0=\bm X\) gives
\begin{equation}
\label{eq:gcn-Hj-bound}
\begin{aligned}
\|\bm H_{j-1}\|_F
&\le
\|\bm X\|_F
\prod_{k=1}^{j-1}
L\|\bm P_G\|_2\|\bm W_k\|_2
\\
&\le
\sqrt n B
\prod_{k=1}^{j-1}
L\|\bm P_G\|_2\|\bm W_k\|_2 .
\end{aligned}
\end{equation}
Combining \eqref{eq:gcn-JWj-spectral-step},
\eqref{eq:gcn-Mj-spectral-bound}, and \eqref{eq:gcn-Hj-bound}, we obtain
\begin{equation}
\label{eq:gcn-JWj-spectral-bound}
\|\bm J_{W_j}\|_2
\le
B
\left(L\|\bm P_G\|_2\right)^{l-1}
\prod_{\substack{k=1\\ k\neq j}}^{l}
\|\bm W_k\|_2.
\end{equation}
Since \(\phi_j\) is the ReLU activation function, we have \(L=1\),
and we use the normalized adjacency matrix \(\|\bm P_G\|_2\le1\), such that
\begin{equation}
\label{eq:gcn-JWj-spectral-bound-simplified}
\|\bm J_{W_j}\|_2
\le
B
\prod_{k\neq j}
\|\bm W_k\|_2 .
\end{equation}
Let
\[
\beta
=
\left(
\prod_{k=1}^{l}\|\bm W_k\|_2
\right)^{1/l}.
\]
By the homogeneity of GCNs, we normalize the weights as
\[
\tilde{\bm W}_k
=
\frac{\beta}{\|\bm W_k\|_2}\bm W_k,
\]
so that we have \(f_{\bm w}=f_{\tilde{\bm w}}\). For the normalized weights,
\(\|\tilde{\bm W}_k\|_2=\beta\), then
\eqref{eq:gcn-JWj-spectral-bound-simplified} gives
\begin{equation}
\label{eq:gcn-JWj-beta-bound}
\|\bm J_{W_j}\|_2
\le
B\beta^{l-1}.
\end{equation}

Since \(f_{\bm w}(G)\in\mathbb R^K\), the Jacobian matrices of GCN output have rank at most \(K\). Hence, for \(j\in[l]\), we have
\begin{equation}
\label{eq:gcn-JWj-svd}
\begin{aligned}
\bm J_{W_j}
&=
\bm Q_{W_j}\bm S_{W_j}\bm V_{W_j}^{T},
\\
\bm S_{W_j}
&=
\diag(s_{W_j,1},\ldots,s_{W_j,K},0,\ldots,0),
\end{aligned}
\end{equation}
where \(s_{W_j,1}\ge\cdots\ge s_{W_j,K}\ge0\), and we have
\(s_{W_j,r}\le B\beta^{l-1}\) for \(r\in[K]\). Accordingly, define
\begin{equation}
\label{eq:gcn-AWj-def}
\bm A_{W_j}
=
\sqrt l\,
\bm V_{W_j}
\diag(
\underbrace{B\beta^{l-1},\ldots,B\beta^{l-1}}_{K},
0,\ldots,0
)
\bm V_{W_j}^{T}.
\end{equation}
It is clear that
\begin{equation}
\label{eq:gcn-JWj-AWj-domination}
\bm J_{W_j}^{T}\bm J_{W_j}
\preceq
\frac{1}{l}\bm A_{W_j}^{T}\bm A_{W_j},
\quad j\in[l].
\end{equation}
Therefore, for any perturbation \(\bm u\), we have
\begin{equation}
\label{eq:gcn-lr-sensitivity-domination}
\begin{aligned}
\|f_{\bm w+\bm u}(G)-f_{\bm w}(G)\|_2^2
&=
\left\|
\sum_{j=1}^{l}\bm J_{W_j}\bm u_{W_j}
\right\|_2^2
\\
&\le
\sum_{j=1}^{l}
\|\bm A_{W_j}\bm u_{W_j}\|_2^2 .
\end{aligned}
\end{equation}

Since the prior cannot depend on the learned value of \(\beta\), we fix a cover point \(\hat\beta\) such that
\(|\beta-\hat\beta|\le \frac{\beta}{l}\).
Then \(\beta^{l-1}\le e\hat\beta^{l-1}\). Let us introduce the approximations of the sensitivity matrices
\begin{equation}
\label{eq:gcn-AWj-hat-def}
\hat{\bm A}_{W_j}
=
\sqrt l\,
\bm V_{W_j}
\diag(
\underbrace{B\hat\beta^{l-1},\ldots,B\hat\beta^{l-1}}_{K},
0,\ldots,0
)
\bm V_{W_j}^{T},
\end{equation}
and optimize the posterior covariance to minimize the KL with
\[
\bm R_{W_j}
=
\left(
\bm I+\eta^2
\hat{\bm A}_{W_j}^{T}\hat{\bm A}_{W_j}
\right)^{-1},
\quad
\eta^2=
\frac{16\kappa\|\bm w\|_2^2}{\gamma^2}.
\]
Since \(\bm R_{W_j}\preceq \bm I\), by the Gaussian concentration
bound, with probability at least \(1/2\) over \(\bm u\), we have
\begin{equation}
\label{eq:gcn-gaussian-concentration}
\begin{aligned}
\sum_{j=1}^{l}
\|\bm A_{W_j}\bm u_{W_j}\|_2^2
&\le
\sigma^2\kappa
\sum_{j=1}^{l}
\Tr\!\left(
\bm A_{W_j}
\bm R_{W_j}
\bm A_{W_j}^{T}
\right)
\\
&\le
e^2\sigma^2\kappa
\sum_{j=1}^{l}
\Tr\!\left(
\hat{\bm A}_{W_j}
\hat{\bm A}_{W_j}^{T}
\right)
\\
&=
e^2\sigma^2\kappa l^2K B^2\hat\beta^{2l-2}.
\end{aligned}
\end{equation}

Combining \eqref{eq:gcn-gaussian-concentration} with
\eqref{eq:gcn-lr-sensitivity-domination}, we obtain
\begin{equation}
\label{eq:gcn-output-perturbation}
\|f_{\bm w+\bm u}(G)-f_{\bm w}(G)\|_2^2
\le
e^2\sigma^2\kappa l^2K B^2\hat\beta^{2l-2}.
\end{equation}

Consider the \(\epsilon\)-attack with \(\epsilon>0\), we have
\begin{equation}
\label{eq:gcn-robust-margin-perturbation}
\begin{aligned}
\MoveEqLeft \left|
RM(f_{\bm w+\bm u}(G),i,j)
-
RM(f_{\bm w}(G),i,j)
\right|^2
\\
&\le
4e^2\sigma^2\kappa l^2K(B+\epsilon)^2\hat\beta^{2l-2}.
\end{aligned}
\end{equation}
By setting
\begin{equation}
\label{eq:gcn-sigma-choice}
\frac{1}{\sigma^2}
=
\frac{
16e^2\kappa l^2K(B+\epsilon)^2\hat\beta^{2l-2}
}{
\gamma^2
},
\end{equation}
the robust perturbation condition
\[
\left|
RM(f_{\bm w+\bm u}(G),i,j)
-
RM(f_{\bm w}(G),i,j)
\right|^2
\le
\frac{\gamma^2}{4}
\]
is satisfied with probability at least \(1/2\) over \(\bm u\).

It remains to derive the KL term and remove the dependence of the prior
on the learned value \(\beta\). Plugging the choice of
\(1/\sigma^2\) in \eqref{eq:gcn-sigma-choice} into the KL objective gives
\begin{equation}
\label{eq:gcn-KL-bound}
D_{\mathrm{KL}}(Q\|P)
\lesssim
\mathcal O\!\left(
\frac{
l^2K(B+\epsilon)^2\beta^{2l-2}\|\bm w\|_2^2
}{
\gamma^2
}
\right).
\end{equation}
Therefore, for any fixed cover point \(\hat\beta\), we have
\begin{equation}
\label{eq:gcn-fixed-beta-bound}
\begin{aligned}
R_{\mathcal D,0}(f_{\bm w})
&\le
\hat R_{S,\gamma}(f_{\bm w})
\\
&\hspace*{-1.0em}+
\mathcal O\!\left(
\sqrt{
\frac{
l^2K(B+\epsilon)^2\beta^{2l-2}\|\bm w\|_2^2
+
\ln\frac{m}{\delta}
}{
\gamma^2m
}
}
\right).
\end{aligned}
\end{equation}

With the same arguments as in the MPGNN, we end up with the final robust generalization bound
\begin{equation}
\label{eq:gcn-final-bound}
\begin{aligned}
R_{\mathcal D,0}(f_{\bm w})
&\le
\hat R_{S,\gamma}(f_{\bm w})
\\
&\hspace*{-1.0em}+
\mathcal O\!\left(
\sqrt{
\frac{
l^2K(B+\epsilon)^2\beta^{2l-2}\|\bm w\|_2^2
+
\ln\frac{ml}{\delta}
}{
\gamma^2m
}
}
\right).
\end{aligned}
\end{equation}
This completes the proof.
\end{proof}

\bibliographystyle{IEEEtran}
\bibliography{references}

\end{document}